\newif\iflong
\newcommand{\bz}{\boldsymbol{z}}
\newcommand{\bx}{\boldsymbol{x}}
\newcommand{\by}{\boldsymbol{y}}
\newcommand{\be}{\boldsymbol{e}}
\newcommand{\bw}{\boldsymbol{w}}
\newcommand{\bv}{\boldsymbol{v}}
\newcommand{\bzero}{\boldsymbol{0}}
\newcommand{\bdelta}{\boldsymbol{\delta}}
\DeclareMathOperator*{\argmin}{argmin}
\DeclareMathOperator*{\argmax}{argmax}
\newcommand{\field}[1]{\mathbb{#1}}
\newcommand{\R}{\field{R}}
\newcommand{\E}{\field{E}}
\newcommand{\norm}[1]{\left\|{#1}\right\|}
\newcommand{\defeq}{\stackrel{\rm def}{=}}
\newcommand{\scO}{\mathcal{O}}
\newtheorem{lemma}{Lemma}
\newtheorem{theorem}{Theorem}
\newtheorem{cor}{Corollary}
\newcommand{\specialcell}[2][c]{\begin{tabular}[#1]{@{}c@{}}#2\end{tabular}}
\icmltitlerunning{Variance-Reduced and Projection-Free Stochastic Optimization}
\title{Variance-Reduced and Projection-Free Stochastic Optimization}
\author{Elad Hazan \\ Princeton University \\ Princeton, NJ 08540 \\ ehazan@cs.princeton.edu
\and
Haipeng Luo \\ Princeton University \\ Princeton, NJ 08540 \\ haipengl@cs.princeton.edu
}
\date{}
\begin{document} 
\iflong
\twocolumn[
\icmltitle{Variance-Reduced and Projection-Free Stochastic Optimization }

\icmlauthor{Elad Hazan}{ehazan@cs.princeton.edu}
\icmladdress{Princeton University, Princeton, NJ 08540, USA}
\icmlauthor{Haipeng Luo}{haipengl@cs.princeton.edu}
\icmladdress{Princeton University, Princeton, NJ 08540, USA}

\icmlkeywords{variance reduction, Frank-Wolfe algorithm, stochastic optimization, projection-free, multiclass classification}

\vskip 0.3in
]
\else
\maketitle
\fi
\begin{abstract} 
The Frank-Wolfe optimization algorithm has recently regained popularity for machine learning applications due to its projection-free property and its ability to handle structured constraints. 
However, in the stochastic learning setting, it is still relatively understudied compared to the gradient descent counterpart. 
In this work, leveraging a recent variance reduction technique, we propose two stochastic Frank-Wolfe variants which substantially improve previous results in terms of the number of stochastic gradient evaluations needed to achieve $1-\epsilon$ accuracy.
For example, we improve from $\scO(\frac{1}{\epsilon})$ to $\scO(\ln\frac{1}{\epsilon})$ if the objective function is smooth and strongly convex, and from $\scO(\frac{1}{\epsilon^2})$ to $\scO(\frac{1}{\epsilon^{1.5}})$ if the objective function is smooth and Lipschitz. 
The theoretical improvement is also observed in experiments on real-world datasets for a multiclass classification application.
\end{abstract} 

\section{Introduction}

We consider the following optimization problem
\[ \min_{\bw \in \Omega} f(\bw) = \min_{\bw \in \Omega} \frac{1}{n} \sum_{i=1}^n f_i(\bw) \]
which is an extremely common objective in machine learning.
We are interested in the case where 1) $n$, usually corresponding to the number of training examples, is very large and therefore stochastic optimization is much more efficient; and 2) the domain $\Omega$ admits fast linear optimization, while projecting onto it is much slower, necessitating projection-free optimization algorithms.
Examples of such problem include multiclass classification, multitask learning, recommendation  systems, matrix learning and many more (see for example~\citep{HazanKa12,HazanKS12,Jaggi13,DudikHaMa12, ZhangScYu12, HarchaouiJuNe15}).

The Frank-Wolfe algorithm~\citep{FrankWo56} (also known as {\it conditional gradient}) and it variants are natural candidates for solving these problems, due to its projection-free property and its ability to handle structured constraints. 
However, despite gaining more popularity recently, its applicability and efficiency in the stochastic learning setting,
where computing stochastic gradients is much faster than computing exact gradients, is still relatively understudied compared to variants of projected gradient descent methods.

In this work, we thus try to answer the following question: {\it what running time can a projection-free algorithm achieve in terms of the number of stochastic gradient evaluations and the number of linear optimizations needed to achieve a certain accuracy?}
Utilizing Nesterov's acceleration technique~\citep{Nesterov83} and the recent variance reduction idea~\citep{JohnsonZh13, MahdaviZhJi13}, we propose two new algorithms that are substantially faster than previous work.
Specifically, to achieve $1-\epsilon$ accuracy, while the number of linear optimization is the same as previous work, the improvement of the number of stochastic gradient evaluations is summarized in Table~\ref{tab:short_comparisons}:

\begin{table}[h!]
\begin{center}
\begin{tabular}{|c|c|c|}
\hline
~ &  previous work  & \textbf{this work}  \\
\hline
Smooth & $\scO(\frac{1}{\epsilon^2})$  & $\scO(\frac{1}{\epsilon^{1.5}})$ \\
\hline
\specialcell{\footnotesize Smooth and \\ \footnotesize Strongly Convex} & $\scO(\frac{1}{\epsilon})$  & $\scO(\ln \frac{1}{\epsilon})$ \\
\hline
\end{tabular}
\end{center}
\caption{Comparisons of number of stochastic gradients}
\label{tab:short_comparisons}
\end{table}

The extra overhead of our algorithms is computing at most $\scO(\ln \frac{1}{\epsilon})$ exact gradients, which is computationally insignificant compared to the other operations. 
A  more detailed comparisons to previous work is included in Table~\ref{tab:comparisons}, which will be further explained in Section~\ref{sec:prelim}.

While the idea of our algorithms is quite straightforward, we emphasize that our analysis is non-trivial, especially for the second algorithm where the convergence of a sequence of auxiliary points in Nesterov's algorithm needs to be shown.

To support our theoretical results, we also conducted experiments on three large real-word datasets for a multiclass classification application. These experiments show significant improvement over both previous projection-free algorithms and algorithms such as projected stochastic gradient descent and its variance-reduced version.

The rest of the paper is organized as follows: 
Section~\ref{sec:prelim} setups the problem more formally and discusses related work. 
Our two new algorithms are presented and analyzed in Section~\ref{sec:SVRF} and~\ref{sec:STORC}, followed by experiment details in Section~\ref{sec:experiments}.

\section{Preliminary and Related Work}\label{sec:prelim}
We assume each function $f_i$ is convex and $L$-smooth in $\R^{d}$ so that for any $\bw, \bv \in \R^{d}$,\footnote{%
We thank Sebastian Pokutta and G\'abor Braun for pointing
out that $f_i$ needs to be defined over $\R^d$, rather than only over $\Omega$, in order for property~\eqref{eq:smoothness1} to hold.}
\begin{equation*}
\begin{split}
 \nabla f_i(\bv)^\top & (\bw - \bv)  \leq f_i(\bw) - f_i(\bv)  \\
& \leq \nabla f_i(\bv)^\top (\bw - \bv) + \frac{L}{2}\norm{\bw - \bv}^2. 
\end{split}
\end{equation*}
We will use two more important properties of smoothness. The first one is
\begin{equation}\label{eq:smoothness1}
\begin{split}
 \| \nabla f_i(\bw) &- \nabla f_i(\bv) \|^2 \leq \\
 &2L(f_i(\bw) - f_i(\bv) - \nabla f_i(\bv)^\top (\bw - \bv))
\end{split}
\end{equation}
(proven in Appendix~\ref{app:smoothness} for completeness),
and the second one is
\begin{equation}\label{eq:smoothness2}
\begin{split}
 f_i(& \lambda \bw +  (1 - \lambda) \bv)  \geq \\
& \lambda f_i(\bw) + (1-\lambda) f_i(\bv) - \frac{L}{2} \lambda(1 - \lambda) \norm{\bw - \bv}^2
\end{split}
\end{equation}
for any $\bw, \bv \in \Omega$ and $\lambda \in [0,1]$. 
Notice that $f = \frac{1}{n}\sum_{i=1}^n f_i$ is also $L$-smooth since smoothness is preserved under convex combinations.

For some cases, we also assume each $f_i$ is $G$-Lipschitz: $\norm{\nabla f_i(\bw)} \leq G$ for any $\bw \in \Omega$,
and $f$ (although not necessarily each $f_i$) is $\alpha$-strongly convex, that is,
\[ f(\bw) - f(\bv) \leq \nabla f(\bw)^\top (\bw - \bv) - \frac{\alpha}{2}\norm{\bw - \bv}^2 \]
for any $\bw, \bv \in \Omega$. 
As usual, $\mu = \frac{L}{\alpha}$ is called the condition number of $f$. 

We assume the domain $\Omega \subset \R^{d}$ is a compact convex set with diameter $D$. 
We are interested in the case where linear optimization on $\Omega$, formally $\argmin_{\bv \in \Omega} \bw^\top \bv$ for any $\bw \in \R^{d}$, is much faster than projection onto $\Omega$, formally $\argmin_{\bv \in \Omega} \norm{\bw - \bv}^2$.
Examples of such domains include the set of all bounded trace norm matrices, the convex hull of all rotation matrices, flow polytope and many more (see for instance~\citep{HazanKa12}).

\subsection{Example Application: Multiclass Classification}\label{subsec:multiclass}

Consider a multiclass classification problem where a set of training examples $(\be_i, y_i)_{i = 1,\ldots, n}$ is given beforehand. 
Here $\be_i \in \R^{m}$ is a feature vector and $y_i \in \{1,\ldots,h\}$ is the label. 
Our goal is to find an accurate linear predictor, a matrix $\bw = [\bw_1^\top; \ldots, \bw_h^\top] \in \R^{h\times m}$ that predicts $\argmax_\ell \bw_\ell^\top \be$ for any example $\be$. Note that here the dimensionality $d$ is $hm$.

Previous work~\citep{DudikHaMa12, ZhangScYu12} found that finding $\bw$ by minimizing a regularized multivariate logistic loss gives a very accurate predictor in general. 
Specifically, the objective can be written in our notation with
\[ f_i(\bw) = \log\bigg( 1 + \sum_{\ell\neq y_i} \exp(\bw_\ell^\top \be_i - \bw_{y_i}^\top \be_i)\bigg) \]
and $ \Omega = \{\bw \in \R^{h\times m} : \|\bw\|_* \leq \tau \} $ where $\norm{\cdot}_*$ denotes the matrix trace norm.
In this case, projecting onto $\Omega$ is equivalent to performing an SVD, which takes $\scO(hm\min\{h, m\})$ time,
while linear optimization on $\Omega$ amounts to finding the top singular vector, which can be done in time linear to the number of non-zeros in the corresponding $h$ by $m$ matrix, and is thus much faster.
One can also verify that each $f_i$ is smooth.
The number of examples $n$ can be prohibitively large for non-stochastic methods (for instance, tens of millions for the ImageNet dataset~\citep{DengDoSoLiLiLi09}),
which makes stochastic optimization  necessary.

\subsection{Detailed Efficiency Comparisons}\label{subsec:comparisons}

\begin{table*}[t]
\vskip 0.15in
\centering
\begin{tabular}{|c|c|c|c|c|}
\hline
Algorithm & Extra Conditions & \#Exact Gradients & \#Stochastic Gradients & \#Linear Optimizations \\
\hline
Frank-Wolfe & &   $\scO(\frac{LD^2}{\epsilon})$  & 0 & $\scO(\frac{LD^2}{\epsilon})$ \\
\hline
\citep{GarberHa13} & \specialcell{{\small $\alpha$-strongly convex} \\ {\footnotesize $\Omega$ is polytope}}  & $\scO(d\mu \rho \ln \frac{LD^2}{\epsilon})$  & 0 & $\scO(d\mu \rho \ln \frac{LD^2}{\epsilon})$\\
\hline
SFW & $G$-Lipschitz & 0 & $\scO(\frac{G^2 LD^4}{\epsilon^3})$  & $\scO(\frac{LD^2}{\epsilon})$ \\
\hline
\multirow{2}{*}[-4pt]{\specialcell{Online-FW \\ \citep{HazanKa12}}} & $G$-Lipschitz & 0 & $\scO(\frac{d^2(LD^2+GD)^4}{\epsilon^4})$  & $\scO(\frac{d(LD^2+GD)^2}{\epsilon^2})$ \\
\cline{2-5}
& \specialcell{{\small $G$-Lipschitz} \\ {\scriptsize ($L=\infty$ allowed)}}& 0 & $\scO(\frac{G^4D^4}{\epsilon^4})$  & $\scO(\frac{G^4D^4}{\epsilon^4})$ \\
\hline
\multirow{2}{*}[-4pt]{\specialcell{SCGS \\ \citep{LanZh14}}} & $G$-Lipschitz & 0 & $\scO(\frac{G^2D^2}{\epsilon^2})$  & $\scO(\frac{LD^2}{\epsilon})$ \\
\cline{2-5}
& \specialcell{{\small $G$-Lipschitz} \\ {\small $\alpha$-strongly convex}} & 0 & $\scO(\frac{G^2}{\alpha\epsilon})$  & $\scO(\frac{LD^2}{\epsilon})$ \\
\hline
\textbf{SVRF (this work)}  &  & $\scO(\ln\frac{LD^2}{\epsilon})$ & $\scO(\frac{L^2D^4}{\epsilon^2})$ & $\scO(\frac{LD^2}{\epsilon})$ \\
\hline
\multirow{3}{*}{\textbf{STORC (this work)}}  & $G$-Lipschitz & $\scO(\ln\frac{LD^2}{\epsilon})$ & $\scO(\frac{\sqrt{L}D^2G}{\epsilon^{1.5}})$ & $\scO(\frac{LD^2}{\epsilon})$ \\
\cline{2-5}
& $\nabla f(\bw^*) = \bzero$ & $\scO(\ln\frac{LD^2}{\epsilon})$ & $\scO(\frac{LD^2}{\epsilon})$ & $\scO(\frac{LD^2}{\epsilon})$ \\
\cline{2-5}
 & $\alpha$-strongly convex & $\scO(\ln\frac{LD^2}{\epsilon})$ & $\scO(\mu^2 \ln \frac{LD^2}{\epsilon})$ & $\scO(\frac{LD^2}{\epsilon})$ \\
\hline
\end{tabular}
\caption{\label{tab:comparisons}  Comparisons of different Frank-Wolfe variants (see Section~\ref{subsec:comparisons} for further explanations). }
\end{table*}

We call $\nabla f_i(\bw)$ a {\it stochastic gradient} for $f$ at some $\bw$, where $i$ is picked from $\{1,\ldots,n\}$ uniformly at random.
Note that a stochastic gradient $\nabla f_i(\bw)$ is an unbiased estimator of the {\it exact gradient} $\nabla f(\bw)$.
The efficiency of a projection-free algorithm is measured by how many numbers of exact gradient evaluations, 
stochastic gradient evaluations and linear optimizations respectively are needed to achieve $1-\epsilon$ accuracy, that is, to output a point $\bw \in \Omega$  such that $\E[f(\bw) - f(\bw^*)] \leq \epsilon$ where $\bw^* \in \argmin_{\bw \in \Omega} f(\bw)$ is any optimum.

In Table~\ref{tab:comparisons}, we summarize the efficiency (and extra assumptions needed beside convexity and smoothness%
\footnote{In general, condition ``$G$-Lipschitz'' in Table~\ref{tab:comparisons} means each $f_i$ is $G$-Lipschitz, except for our STORC algorithm which only requires $f$ being $G$-Lipschitz.}) of existing algorithms in the literature as well as  the two new algorithms we propose.
Below we briefly explain these results from top to bottom.

The standard Frank-Wolfe algorithm:
\begin{equation}\label{eq:FW}
\begin{split}
\bv_k &= \argmin_{\bv \in \Omega} \nabla f(\bw_{k-1})^\top \bv \\
\bw_k &= (1 - \gamma_k) \bw_{k-1} + \gamma_k \bv_k
\end{split}
\end{equation}
for some appropriate chosen $\gamma_k$ requires $\scO(\frac{1}{\epsilon})$ iteration without additional conditions~\citep{FrankWo56,Jaggi13}. In a recent paper, \citet{GarberHa13} give a variant that requires $\scO( d \mu \rho \ln\frac{1}{\epsilon})$ iterations when $f$ is strongly convex and smooth, and $\Omega$ is a polytope\footnote{See also recent follow up work \cite{LacosteJulienJa15}.}. Although the dependence on $\epsilon$ is much better, the geometric constant  $\rho$ depends on the polyhedral set and can be very large. Moreover, each iteration of the algorithm requires further computation besides the linear optimization step.  

The most obvious way to obtain a stochastic Frank-Wolfe variant is to replace $\nabla f(\bw_{k-1})$ by some $\nabla f_i(\bw_{k-1})$, or more generally the average of some number of iid samples of $\nabla f_i(\bw_{k-1})$ (mini-batch approach).
We call this method SFW and include its analysis in Appendix~\ref{app:SFW} since we do not find it explicitly analyzed before.  
SFW needs $\scO(\frac{1}{\epsilon^3})$ stochastic gradients and $\scO(\frac{1}{\epsilon})$ linear optimization steps to reach an $\epsilon$-approximate optimum.

The work by~\citet{HazanKa12} focuses on a online learning setting. 
One can extract two results from this work for the setting studied here.\footnote{The first result comes from the setting where the online loss functions are stochastic, and the second one comes from a completely online setting with the standard online-to-batch conversion.}
In any case, the result is worse than SFW for both the number of stochastic gradients and the number of linear optimizations.

Stochastic Condition Gradient Sliding (SCGS), recently proposed by~\cite{LanZh14}, 
uses Nesterov's acceleration technique to speed up Frank-Wolfe.
Without strong convexity, SCGS needs $\scO(\frac{1}{\epsilon^2})$ stochastic gradients, improving SFW.
With strong convexity, this number can even be improved to $\scO(\frac{1}{\epsilon})$.
In both cases, the number of linear optimization steps is $\scO(\frac{1}{\epsilon})$.
 
The key idea of our algorithms is to combine the variance reduction technique proposed in~\citep{JohnsonZh13, MahdaviZhJi13} with some of the above-mentioned algorithms.
For example,  our algorithm SVRF combines this technique with SFW, also improving the number of stochastic gradients from $\scO(\frac{1}{\epsilon^3})$ to  $\scO(\frac{1}{\epsilon^2})$, but without any extra conditions (such as Lipschitzness required for SCGS).
More importantly, despite having seemingly same convergence rate, SVRF substantially outperforms SCGS empirically (see Section~\ref{sec:experiments}).

On the other hand, our second algorithm STORC combines variance reduction with SCGS, providing even further improvements.
Specifically, the number of stochastic gradients is improved to: $\scO(\frac{1}{\epsilon^{1.5}})$ when $f$ is Lipschitz;
$\scO(\frac{1}{\epsilon})$ when $\nabla f(\bw^*) = \bzero$;
and finally $\scO(\ln\frac{1}{\epsilon})$ when $f$ is strongly convex.
Note that the condition $\nabla f(\bw^*) = \bzero$ essentially means that $\bw^*$ is in the interior of $\Omega$, 
but it is still an interesting case when the optimum is not unique and doing unconstraint optimization would not necessary return a point in $\Omega$.

Both of our algorithms require $\scO(\frac{1}{\epsilon})$ linear optimization steps as previous work, and overall require computing $\scO(\ln\frac{LD^2}{\epsilon})$ exact gradients.
However, we emphasize that this extra overhead is much more affordable compared to non-stochastic Frank-Wolfe (that is, computing exact gradients every iteration) since it does not have any polynomial dependence on parameters such as $d$, $L$ or $\mu$.

\subsection{Variance-Reduced Stochastic Gradients}
Originally proposed in~\citep{JohnsonZh13} and independently in~\citep{MahdaviZhJi13}, the idea of variance-reduced stochastic gradients is proven to be highly useful and has been extended to various different algorithms (such as~\citep{FrostigGeKaSi15, MoritzNiJo15}).

A variance-reduced stochastic gradient at some point $\bw \in \Omega$ with some {\it snapshot} $\bw_0 \in \Omega$ is defined as 
\[ \tilde\nabla f(\bw; \bw_0) = \nabla f_i(\bw) - (\nabla f_i(\bw_0) - \nabla f(\bw_0)), \]
where $i$ is again picked from $\{1,\ldots,n\}$ uniformly at random.
The snapshot $\bw_0$ is usually a decision point from some previous iteration of the algorithm and its exact gradient $\nabla f(\bw_0)$ has been pre-computed before, so that computing $\tilde\nabla f(\bw; \bw_0)$ only requires two standard stochastic gradient evaluations: $\nabla f_i(\bw)$ and $\nabla f_i(\bw_0)$. 

A variance-reduced stochastic gradient is clearly also unbiased, that is, $\E[\tilde\nabla f(\bw; \bw_0)] = \nabla f(\bw)$.
More importantly, the term $\nabla f_i(\bw_0) - \nabla f(\bw_0)$ serves as a correction term to reduce the variance of the stochastic gradient.
Formally, one can prove the following

\begin{lemma}\label{lem:variance}
For any $\bw, \bw_0 \in \Omega$, we have
\begin{equation*}
\begin{split}
&\E[\| \tilde\nabla f(\bw; \bw_0)  - \nabla f(\bw) \|^2 ] \\
& \leq 6L(2\E[f(\bw) - f(\bw^*)] + \E[f(\bw_0) - f(\bw^*)]).
\end{split}
\end{equation*}
\end{lemma}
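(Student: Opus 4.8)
The plan is to split the error $\tilde\nabla f(\bw;\bw_0) - \nabla f(\bw)$ into three pieces, each anchored at the optimum $\bw^*$, so that the smoothness inequality~\eqref{eq:smoothness1} can be applied to every piece. Writing $i$ for the random index, one checks the identity
\begin{align*}
&\tilde\nabla f(\bw;\bw_0) - \nabla f(\bw) \\
&\quad= \big(\nabla f_i(\bw) - \nabla f_i(\bw^*)\big) - \big(\nabla f(\bw) - \nabla f(\bw^*)\big) \\
&\qquad - \Big[\big(\nabla f_i(\bw_0) - \nabla f_i(\bw^*)\big) - \big(\nabla f(\bw_0) - \nabla f(\bw^*)\big)\Big],
\end{align*}
and I would call the three bracketed quantities $u$, $w$ (deterministic) and $v$ (which has zero mean over the draw of $i$), so that the error equals $u - w - v$. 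Applying $\norm{u - w - v}^2 \le 3(\norm{u}^2 + \norm{w}^2 + \norm{v}^2)$, taking expectation over $i$, and using that centering does not increase the second moment (hence $\E\norm{v}^2 \le \E\norm{\nabla f_i(\bw_0) - \nabla f_i(\bw^*)}^2$), it remains to bound three second-moment terms, two involving $\bw$ and one involving $\bw_0$.

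Next I would bound each term with smoothness. Applying~\eqref{eq:smoothness1} to each $f_i$ at $\bv = \bw^*$ and averaging over $i$ (using $\E_i[f_i] = f$ and $\E_i[\nabla f_i] = \nabla f$) gives $\E\norm{u}^2 \le 2L\big(f(\bw) - f(\bw^*) - \nabla f(\bw^*)^\top(\bw - \bw^*)\big)$, and likewise $\E\norm{\nabla f_i(\bw_0) - \nabla f_i(\bw^*)}^2 \le 2L\big(f(\bw_0) - f(\bw^*) - \nabla f(\bw^*)^\top(\bw_0 - \bw^*)\big)$; applying the same inequality to $f$ itself (which is also $L$-smooth) bounds $\norm{w}^2$ by $2L\big(f(\bw) - f(\bw^*) - \nabla f(\bw^*)^\top(\bw - \bw^*)\big)$. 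The gradient inner-product terms are then removed via the first-order optimality condition for the constrained problem: since $\bw, \bw_0 \in \Omega$ and $\bw^* \in \argmin_{\bw\in\Omega} f(\bw)$, we have $\nabla f(\bw^*)^\top(\bw - \bw^*) \ge 0$ and $\nabla f(\bw^*)^\top(\bw_0 - \bw^*) \ge 0$, so each of the three bounds is at most $2L$ times the corresponding optimality gap.

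Putting the pieces together, $\E\norm{u}^2 + \E\norm{w}^2 + \E\norm{v}^2 \le 2L(f(\bw) - f(\bw^*)) + 2L(f(\bw) - f(\bw^*)) + 2L(f(\bw_0) - f(\bw^*))$, and multiplying by $3$ yields exactly $6L\big(2(f(\bw) - f(\bw^*)) + (f(\bw_0) - f(\bw^*))\big)$; if $\bw$ and $\bw_0$ are random (as they are when the lemma is invoked inside the algorithm), one takes a further outer expectation, using that $i$ is drawn independently, to obtain the stated bound. I expect the only delicate steps to be (i) choosing the three-way split so that precisely one summand is zero-mean — this is what lets us pass from variance to second moment while still anchoring every term at $\bw^*$, and it is also what produces the $2\!:\!1$ weighting between the $\bw$ and $\bw_0$ contributions — and (ii) using the constrained first-order optimality condition, rather than mere convexity (which would give the inequality in the wrong direction), to discard the gradient inner products.
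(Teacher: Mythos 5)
Your proposal is correct and follows essentially the same route as the paper's proof: the same three-way decomposition anchored at $\bw^*$ (with the $\bw_0$ piece being the single zero-mean term whose variance is bounded by its second moment), the same application of Property~\eqref{eq:smoothness1} to each piece, and the same use of the first-order optimality of $\bw^*$ to discard the gradient inner products. No gaps.
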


In words, the variance of the variance-reduced stochastic gradient is bounded by how close the current point and the snapshot are to the optimum. 
The original work proves a bound on $\E[\| \tilde\nabla f(\bw; \bw_0)\|^2]$ under the assumption $\nabla f(\bw^*) = \bzero$,
which we do not require here. 
However, the main idea of the proof is similar and we defer it to Section~\ref{sec:proofs}.

\section{Stochastic Variance-Reduced Frank-Wolfe}\label{sec:SVRF}

With the previous discussion, our first algorithm is pretty straightforward:
compared to the standard Frank-Wolfe, we simply replace the exact gradient with the average of a mini-batch of variance-reduced stochastic gradients, and take snapshots every once in a while. 
We call this algorithm Stochastic Variance-Reduced Frank-Wolfe (SVRF), whose pseudocode is presented in Alg~\ref{alg:SVRF}.
The convergence rate of this algorithm is shown in the following theorem.

\begin{algorithm}[t]
\caption{Stochastic Variance-Reduced Frank-Wolfe (SVRF)}
\label{alg:SVRF}
\begin{algorithmic}[1]
\STATE {\bfseries Input:} Objective function $f = \frac{1}{n}\sum_{i=1}^n f_i$. 
\STATE {\bfseries Input:} Parameters $\gamma_k$, $m_k$ and $N_k$.
\STATE {\bfseries Initialize:} $\bw_0 = \argmin_{\bw \in \Omega} \nabla f(\bx)^\top \bw$ for some arbitrary $\bx \in \Omega$.
\FOR{$t=1, 2, \ldots, T $}
    \STATE Take snapshot: $\bx_0 = \bw_{t-1}$ and compute $\nabla f(\bx_0)$.
    \FOR{$k=1$ {\bfseries to} $N_t$} 
        \STATE Compute $\tilde\nabla_k$, the average of $m_k$ iid samples of $\tilde\nabla f(\bx_{k-1}, \bx_0)$.
        \STATE Compute $\bv_k = \argmin_{\bv \in \Omega} \tilde\nabla_k^\top \bv$.
        \STATE Compute $\bx_k = (1 - \gamma_k) \bx_{k-1} + \gamma_k \bv_k$.
    \ENDFOR  
    \STATE Set $\bw_t = \bx_{N_t}$.
\ENDFOR 
\end{algorithmic}
\end{algorithm}

\begin{theorem}\label{thm:SVRF}
With the following parameters,
\[ \gamma_k = \frac{2}{k+1}, \;  m_k = 96(k+1), \; N_t =  2^{t+3} - 2, \]
Algorithm~\ref{alg:SVRF} ensures 
$ \E[f(\bw_t) - f(\bw^*)] \leq \frac{LD^2}{2^{t+1}}$ for any $t$.
\end{theorem}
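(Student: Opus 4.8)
The plan is a two-level induction. The outer induction is on $t$, with hypothesis $\E[f(\bw_t) - f(\bw^*)] \le LD^2/2^{t+1}$; inside outer iteration $t$ there is an inner induction on $k$ with hypothesis $\E[f(\bx_k) - f(\bw^*)] \le \frac{4LD^2}{k+2}$ for every $0 \le k \le N_t$. The base case $t=0$ I would treat directly: $\bw_0$ is one Frank--Wolfe step with step size $1$ from an arbitrary $\bx \in \Omega$, so $L$-smoothness of $f$ together with $\nabla f(\bx)^\top(\bw_0 - \bx) \le \nabla f(\bx)^\top(\bw^* - \bx) \le f(\bw^*) - f(\bx)$ gives $f(\bw_0) - f(\bw^*) \le LD^2/2$. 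For the outer inductive step, observe $\bx_0 = \bw_{t-1}$, so the outer hypothesis applied to $t-1$ gives $\E[f(\bx_0) - f(\bw^*)] \le LD^2/2^{t}$; granting the inner induction and evaluating it at $k = N_t = 2^{t+3}-2$ yields $\E[f(\bw_t) - f(\bw^*)] \le \frac{4LD^2}{2^{t+3}} = \frac{LD^2}{2^{t+1}}$, which closes the outer loop. So everything reduces to the inner statement.

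For the inner induction I would first record the one-step descent bound. Applying $L$-smoothness to $\bx_k = \bx_{k-1} + \gamma_k(\bv_k - \bx_{k-1})$, then using that $\bv_k$ minimizes $\tilde\nabla_k^\top\bv$ over $\Omega$ (hence $\tilde\nabla_k^\top\bv_k \le \tilde\nabla_k^\top\bw^*$), convexity of $f$, Cauchy--Schwarz, and $\|\bw^* - \bv_k\| \le D$, one obtains, for every realization,
\[ f(\bx_k) - f(\bw^*) \le (1-\gamma_k)\big(f(\bx_{k-1}) - f(\bw^*)\big) + \gamma_k D\,\|\tilde\nabla_k - \nabla f(\bx_{k-1})\| + \frac{L\gamma_k^2 D^2}{2}. \]
Writing $h_k = \E[f(\bx_k) - f(\bw^*)]$ and taking expectations, the only nonstandard term is $\E[\|\tilde\nabla_k - \nabla f(\bx_{k-1})\|]$. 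Since $\tilde\nabla_k$ is the average of $m_k$ i.i.d.\ variance-reduced gradients with snapshot $\bx_0$, I would apply Lemma~\ref{lem:variance} conditionally on the history (which divides the variance by $m_k$), take the outer expectation, and then Jensen, to get $\E[\|\tilde\nabla_k - \nabla f(\bx_{k-1})\|] \le \sqrt{6L(2h_{k-1} + h_0)/m_k}$.

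Now I would plug in the two bounds available inside outer iteration $t$: the inner hypothesis $h_{k-1} \le \frac{4LD^2}{k+1}$, and $h_0 \le LD^2/2^t \le \frac{8LD^2}{k+2}$, where the last step is exactly the constraint $k \le N_t$, i.e.\ $k+2 \le 2^{t+3}$. Together these give $2h_{k-1} + h_0 \le \frac{16LD^2}{k+1}$, and with the prescribed $m_k = 96(k+1)$ the square root collapses to $\E[\|\tilde\nabla_k - \nabla f(\bx_{k-1})\|] \le \frac{LD}{k+1}$. Substituting this and $\gamma_k = \frac{2}{k+1}$ into the recursion gives $h_k \le \frac{k-1}{k+1}h_{k-1} + \frac{4LD^2}{(k+1)^2} \le \frac{4LD^2 k}{(k+1)^2}$, and the elementary inequality $k(k+2) \le (k+1)^2$ upgrades this to $h_k \le \frac{4LD^2}{k+2}$. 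The cases $k=0$ ($h_0 \le LD^2/2^t \le 2LD^2$) and $k=1$ ($\gamma_1 = 1$, so the $h_0$ term vanishes) are immediate.

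I expect the main obstacle to be the bookkeeping in the variance step: one must simultaneously track the $\frac{1}{k+1}$-decay of $h_{k-1}$ and the \emph{fixed} snapshot error $h_0$, and it is precisely the choice $N_t = 2^{t+3}-2$ that keeps $h_0$ of the same order as $\frac{LD^2}{k+2}$ over the whole inner loop, while $m_k = 96(k+1)$ is calibrated so that the total variance contribution is of order $\frac{LD^2}{(k+1)^2}$, exactly summable to the $O(1/k)$ rate. The only other delicate point is getting the conditioning right when invoking Lemma~\ref{lem:variance} (condition on the history first, then take the outer expectation, then Jensen); no further ideas beyond this are needed.
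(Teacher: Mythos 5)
Your proposal is correct and follows essentially the same route as the paper: the same two-level induction, the same one-step Frank--Wolfe descent bound (which the paper factors out as Lemma~\ref{lem:SVRF}), and the same use of Lemma~\ref{lem:variance} with the mini-batch variance reduction by $m_k$, with only cosmetic differences in how the snapshot term $h_0$ is compared to the $1/(k+1)$-decaying term. The arithmetic with $m_k = 96(k+1)$ and $N_t = 2^{t+3}-2$ checks out exactly as in the paper's proof.
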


Before proving this theorem, we first show a direct implication of this convergence result.

\begin{cor}
To achieve $1-\epsilon$ accuracy, Algorithm~\ref{alg:SVRF} requires $\scO(\ln\frac{LD^2}{\epsilon})$ exact gradient evaluations, $\scO(\frac{L^2D^4}{\epsilon^2})$ stochastic gradient evaluations and $\scO(\frac{LD^2}{\epsilon})$ linear optimizations.
\end{cor}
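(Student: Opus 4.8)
The plan is to read off the three counts directly from Theorem~\ref{thm:SVRF}: once the number of outer epochs $T$ is chosen so that the guaranteed accuracy $\frac{LD^2}{2^{T+1}}$ drops below $\epsilon$, every per-epoch cost of Algorithm~\ref{alg:SVRF} grows geometrically in $t$, so each total is of the same order as its last term.

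First I would fix $T$. By Theorem~\ref{thm:SVRF}, $\E[f(\bw_T) - f(\bw^*)] \le \epsilon$ holds as soon as $\frac{LD^2}{2^{T+1}} \le \epsilon$, i.e. for $T = \Theta\!\left(\ln\frac{LD^2}{\epsilon}\right)$. Since line~5 of the algorithm computes exactly one exact gradient per epoch (the snapshot gradient $\nabla f(\bx_0)$), this immediately yields the claimed $\scO(\ln\frac{LD^2}{\epsilon})$ bound on exact gradient evaluations.

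Next I would bound the linear optimizations: line~8 performs one per inner iteration, so their number is $\sum_{t=1}^{T} N_t = \sum_{t=1}^{T}(2^{t+3}-2) = \Theta(2^{T})$, which is $\scO(\frac{LD^2}{\epsilon})$ by the choice of $T$. For the stochastic gradients, inner iteration $k$ of epoch $t$ samples $m_k = 96(k+1)$ variance-reduced gradients (line~7), and each such gradient costs two ordinary stochastic gradient evaluations ($\nabla f_i(\bx_{k-1})$ and $\nabla f_i(\bx_0)$) by the definition of $\tilde\nabla f$. Hence epoch $t$ uses $\Theta\!\left(\sum_{k=1}^{N_t} m_k\right) = \Theta(N_t^2) = \Theta(4^{t})$ stochastic gradients, and summing the geometric series over $t$ gives $\Theta(4^{T}) = \Theta\!\left((2^{T})^2\right) = \scO(\frac{L^2D^4}{\epsilon^2})$.

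I do not expect any genuine obstacle here; the argument is just bookkeeping over two geometric series. The only points needing a bit of care are (i) that $m_k$ counts samples per \emph{inner} step, not per epoch, so the per-epoch stochastic-gradient sum is quadratic in $N_t$; (ii) the harmless factor of two in the stochastic-gradient count coming from evaluating both $\nabla f_i(\bx_{k-1})$ and $\nabla f_i(\bx_0)$ per variance-reduced gradient; and (iii) the observation that $2^{T}$, $2^{T+1}$, and the tails of the two geometric sums are all $\Theta(LD^2/\epsilon)$, which keeps the three big-$\scO$ statements mutually consistent.
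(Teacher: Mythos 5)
Your proposal is correct and matches the paper's own argument: count the three quantities as $T+1$ exact gradients, $\sum_{t=1}^T\sum_{k=1}^{N_t} m_k = \Theta(4^T)$ stochastic gradients, and $\sum_{t=1}^T N_t = \Theta(2^T)$ linear optimizations, then set $T = \Theta(\log_2\frac{LD^2}{\epsilon})$ via Theorem~1. The only (harmless) omission is the one extra exact gradient computed at initialization, which does not affect the stated $\scO(\ln\frac{LD^2}{\epsilon})$ bound.
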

\begin{proof}
According to the algorithm and the choice of parameters, it is clear that these three numbers are $T+1$, $\sum_{t=1}^T\sum_{k=1}^{N_t} m_k = \scO(4^T)$ and $\sum_{t=1}^T N_t = \scO(2^T)$ respectively.
Theorem~\ref{thm:SVRF} implies that $T$ should be of order $\Theta(\log_2\frac{LD^2}{\epsilon})$. 
Plugging in all parameters concludes the proof.
\end{proof}

To prove Theorem~\ref{thm:SVRF}, we first consider a fixed iteration $t$ and prove the following lemma:

\begin{lemma}\label{lem:SVRF}
For any $k$,  we have 
\[ \E[f(\bx_k) - f(\bw^*)] \leq \frac{4LD^2}{k+2} \]
if $ \E[\| \tilde\nabla_s - \nabla f(\bx_{s-1})\|^2 ] \leq \frac{L^2D^2}{(s+1)^2}$ for all $s \leq  k$.
\end{lemma}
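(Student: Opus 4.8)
The plan is a Frank--Wolfe descent argument adapted to the fact that the linear optimization step is performed against the inexact gradient $\tilde\nabla_k$ rather than the true $\nabla f(\bx_{k-1})$; the hypothesis on $\E[\norm{\tilde\nabla_s - \nabla f(\bx_{s-1})}^2]$ is exactly what controls the resulting error. Write $h_k = f(\bx_k) - f(\bw^*)$. First I would apply $L$-smoothness of $f$ to the update $\bx_k = (1-\gamma_k)\bx_{k-1} + \gamma_k\bv_k$; since $\bx_k - \bx_{k-1} = \gamma_k(\bv_k - \bx_{k-1})$ and $\norm{\bv_k - \bx_{k-1}} \le D$, this gives
\[ f(\bx_k) \le f(\bx_{k-1}) + \gamma_k \nabla f(\bx_{k-1})^\top(\bv_k - \bx_{k-1}) + \tfrac{1}{2}L\gamma_k^2 D^2. \]
Then I would bound the linear term: using optimality of $\bv_k$ for $\tilde\nabla_k$ (so $\tilde\nabla_k^\top(\bv_k - \bx_{k-1}) \le \tilde\nabla_k^\top(\bw^* - \bx_{k-1})$) and convexity of $f$ (so $\nabla f(\bx_{k-1})^\top(\bw^* - \bx_{k-1}) \le -h_{k-1}$),
\[ \nabla f(\bx_{k-1})^\top(\bv_k - \bx_{k-1}) \le -h_{k-1} + (\tilde\nabla_k - \nabla f(\bx_{k-1}))^\top(\bw^* - \bv_k), \]
and Cauchy--Schwarz with $\norm{\bw^* - \bv_k} \le D$ bounds the last inner product by $D\norm{\tilde\nabla_k - \nabla f(\bx_{k-1})}$.

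Combining the two displays yields the pointwise inequality $h_k \le (1-\gamma_k)h_{k-1} + \gamma_k D\norm{\tilde\nabla_k - \nabla f(\bx_{k-1})} + \tfrac{1}{2}L\gamma_k^2 D^2$. Taking expectations, applying Jensen's inequality with the hypothesis to get $\E\norm{\tilde\nabla_k - \nabla f(\bx_{k-1})} \le \sqrt{\E\norm{\tilde\nabla_k - \nabla f(\bx_{k-1})}^2} \le LD/(k+1)$, and plugging in $\gamma_k = 2/(k+1)$, gives
\[ \E[h_k] \le \tfrac{k-1}{k+1}\E[h_{k-1}] + \tfrac{4LD^2}{(k+1)^2}. \]
I would then close by induction on $k$. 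The base case $k=1$ is self-starting: $\gamma_1 = 1$ makes the $\E[h_0]$ term vanish, so the recursion directly gives $\E[h_1] \le LD^2 \le \tfrac{4LD^2}{3}$ (using the hypothesis for $s=1$). For the inductive step, assuming $\E[h_{k-1}] \le 4LD^2/(k+1)$, the recursion gives $\E[h_k] \le 4LD^2 k/(k+1)^2$, and since $k(k+2)\le(k+1)^2$ this is at most $4LD^2/(k+2)$, as claimed.

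The one delicate point is the treatment of the cross term $(\tilde\nabla_k - \nabla f(\bx_{k-1}))^\top(\bw^* - \bv_k)$ in expectation: because $\bv_k$ is itself a function of $\tilde\nabla_k$, conditioning on $\bx_{k-1}$ and invoking unbiasedness of $\tilde\nabla_k$ does \emph{not} make this term vanish. Routing it instead through Cauchy--Schwarz and then Jensen, so that only the second-moment hypothesis is used (at the cost of a square root), is the crux; everything after that is the familiar Frank--Wolfe telescoping with $\gamma_k = 2/(k+1)$, where the constant $4$ emerges from balancing the two error contributions $\gamma_k D\cdot LD/(k+1)$ and $\tfrac{1}{2}L\gamma_k^2 D^2$.
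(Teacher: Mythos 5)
Your proposal is correct and follows essentially the same route as the paper's proof: the identical smoothness expansion, the swap $\tilde\nabla_k^\top\bv_k \le \tilde\nabla_k^\top\bw^*$ followed by convexity, the Cauchy--Schwarz plus Jensen treatment of the cross term $(\tilde\nabla_k - \nabla f(\bx_{k-1}))^\top(\bw^* - \bv_k)$, and the same induction closing $4LD^2k/(k+1)^2 \le 4LD^2/(k+2)$. Your remark that unbiasedness cannot be used to kill the cross term because $\bv_k$ depends on $\tilde\nabla_k$ is exactly the reason the paper also routes it through Cauchy--Schwarz.
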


We defer the proof of this lemma to Section~\ref{sec:proofs} for coherence.
With the help of Lemma~\ref{lem:SVRF}, we are now ready to prove the main convergence result.

\begin{proof}[Proof of Theorem~\ref{thm:SVRF}]
We prove by induction. For $t=0$, by smoothness, the optimality of $\bw_0$ and convexity, we have
\begin{align*}
f(\bw_0) &\leq f(\bx) + \nabla f(\bx)^\top (\bw_0 - \bx) + \frac{L}{2}\|\bw_0 - \bx\|^2 \\
&\leq f(\bx) + \nabla f(\bx)^\top (\bw^* - \bx) + \frac{LD^2}{2}  \\
&\leq f(\bw^*) + \frac{LD^2}{2}.  
\end{align*}
Now assuming $\E[f(\bw_{t-1}) - f(\bw^*)] \leq \frac{LD^2}{2^{t}}$, we consider iteration $t$ of the algorithm and use another induction to show
$\E[f(\bx_k) - f(\bw^*)] \leq \frac{4LD^2}{k+2}$ for any $k \leq N_t$. 
The base case is trivial since $\bx_0 = \bw_{t-1}$. 
Suppose $\E[f(\bx_{s-1}) - f(\bw^*)]  \leq \frac{4LD^2}{s+1}$ for any $s \leq k$. 
Now because $\tilde\nabla_s$ is the average of $m_s$ iid samples of $\tilde\nabla f(\bx_{s-1}; \bx_0)$,
its variance is reduced by a factor of $m_s$. That is, with Lemma~\ref{lem:variance} we have
\begin{align*}
&\E[\| \tilde\nabla_s - \nabla f(\bx_{s-1})\|^2 ]  \\
\leq\;& \frac{6L}{m_s}(2\E[f(\bx_{s-1}) - f(\bw^*)] + \E[f(\bx_0) - f(\bw^*)] ) \\
\leq\;& \frac{6L}{m_s} \left( \frac{8LD^2}{s+1} + \frac{LD^2}{2^t} \right)  \\
\leq\;& \frac{6L}{m_s} \left( \frac{8LD^2}{s+1} + \frac{8LD^2}{s+1} \right) = \frac{L^2 D^2}{(s+1)^2},
\end{align*}
where the last inequality is by the fact $s \leq N_t = 2^{t+3} - 2$ and the last equality is by plugging the choice of $m_s$.
Therefore the condition of Lemma~\ref{lem:SVRF} is satisfied and the induction is completed.
Finally with the choice of $N_t$ we thus prove $\E[f(\bw_t) - f(\bw^*)] = \E[f(\bx_{N_t}) - f(\bw^*)] \leq \frac{4LD^2}{N_t+2} = \frac{LD^2}{2^{t+1}}$.
\end{proof}

We remark that in Alg~\ref{alg:SVRF}, we essentially restart the algorithm (that is, reseting $k$ to 1) after taking a new snapshot.
However, another option is to continue increasing $k$ and never reset it.
Although one can show that this only leads to constant speed up for the convergence, it provides more stable update and is thus what we implement in experiments.

\section{Stochastic Variance-Reduced Conditional Gradient Sliding}\label{sec:STORC}

Our second algorithm applies variance reduction to the SCGS algorithm~\citep{LanZh14}.
Again, the key difference is that we replace the stochastic gradients with the average of a mini-batch of variance-reduced stochastic gradients, and take snapshots every once in a while. 
See pseudocode in Alg~\ref{alg:STORC} for details.

The algorithm makes use of two auxiliary sequences $\bx_k$ and $\bz_k$ (Line~8 and~12), which is standard for Nesterov's algorithm. $\bx_k$ is obtained by approximately solving a square norm regularized linear optimization so that it is close to $\bx_{k-1}$ (Line~11). 
Note that this step does not require computing any extra gradients of $f$ or $f_i$, and is done by performing the standard Frank-Wolfe algorithm (Eq.~\eqref{eq:FW}) until the duality gap is at most a certain value $\eta_{t,k}$.
The duality gap is a certificate of approximate optimality (see~\citep{Jaggi13}), and is a side product of the linear optimization performed at each step, requiring no extra cost.

Also note that the stochastic gradients are computed at the sequence $\bz_k$ instead of $\by_k$, which is also standard in Nesterov's algorithm. 
However, according to Lemma~\ref{lem:variance}, we thus need to show the convergence rate of the auxiliary sequence $\bz_k$, which appears to be rarely studied previously to the best our knowledge. 
This is one of the key steps in our analysis. 

The main convergence result of STORC is the following:

\begin{algorithm}[t]
\caption{STOchastic variance-Reduced Conditional gradient sliding (STORC)}
\label{alg:STORC}
\begin{algorithmic}[1]
\STATE {\bfseries Input:} Objective function $f = \frac{1}{n}\sum_{i=1}^n f_i$.
\STATE {\bfseries Input:} Parameters $\gamma_k$, $\beta_k$, $\eta_{t,k}$, $m_{t,k}$ and $N_t$.
\STATE {\bfseries Initialize:} $\bw_0 = \argmin_{\bw \in \Omega} \nabla f(\bx)^\top \bw$ for some arbitrary $\bx \in \Omega$.
\FOR{$t = 1, 2, \ldots$}
    \STATE Take snapshot: $\by_0 = \bw_{t-1}$ and compute $\nabla f(\by_0)$.
    \STATE Initialize $\bx_0 =  \by_0$.
    \FOR{$k=1$ {\bfseries to} $N_t$} 
        \STATE Compute $\bz_k = (1 - \gamma_k) \by_{k-1} + \gamma_k \bx_{k-1}$.
        \STATE Compute $\tilde\nabla_k$, the average of $m_{t,k}$ iid samples of $\tilde\nabla f(\bz_k; \by_0)$.
        \STATE Let $g(\bx) = \frac{\beta_k}{2} \norm{\bx - \bx_{k-1}}^2 + \tilde\nabla_k^\top\bx$.
        \STATE Compute $\bx_k$, the output of using standard Frank-Wolfe to solve $\min_{\bx \in \Omega} g(\bx)$ until the duality gap is at most $\eta_{t,k}$, that is, 
        \vspace{-3pt}\begin{equation}\label{eq:gap}
        \max_{\bx \in \Omega} \nabla g(\bx_k)^\top (\bx_k - \bx)  \leq \eta_{t,k} ~. 
        \end{equation} \label{line:WolfeGap} \vspace{-10pt}
        \STATE Compute $\by_k = (1 - \gamma_k) \by_{k-1} + \gamma_k \bx_k$.
    \ENDFOR 
    \STATE Set $\bw_t = \by_{N_t}$.
\ENDFOR
\end{algorithmic}
\end{algorithm}

\begin{theorem}\label{thm:STORC}
With the following parameters (where $D_t$ is defined later below):
\[ \gamma_k = \frac{2}{k+1}, \; \beta_k = \frac{3L}{k}, \; \eta_{t,k} = \frac{2LD_t^2}{N_t k}, \]
Algorithm~\ref{alg:STORC} ensures $\E[f(\bw_t) - f(\bw^*)] \leq \frac{LD^2}{2^{t+1}}$ for any $t$ if any of the following three cases holds:
\begin{enumerate}[label=(\alph*)]
\item $\nabla f(\bw^*) = \bzero$ and $D_t = D, N_t = \lceil 2^{\frac{t}{2}+2} \rceil, m_{t, k} = 900N_t$.

\item $f$ is $G$-Lipschitz and $D_t = D, N_t = \lceil 2^{\frac{t}{2}+2} \rceil, m_{t, k} = 700N_t + \frac{24N_t G (k+1)}{LD}$.

\item $f$ is $\alpha$-strongly convex and $D_t^2 = \frac{\mu D^2}{2^{t-1}}, N_t = \lceil \sqrt{32\mu} \rceil, m_{t,k} = 5600N_t \mu$ where $\mu = \frac{L}{\alpha}$. 

\end{enumerate}
\end{theorem}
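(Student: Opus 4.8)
The plan is to follow the two‑level induction used for Theorem~\ref{thm:SVRF}. The outer induction is on the epoch index $t$: the base case $\E[f(\bw_0) - f(\bw^*)] \le \tfrac{LD^2}{2}$ is obtained exactly as before (smoothness at the arbitrary point $\bx$, optimality of $\bw_0$, convexity), and for the inductive step it suffices to show that one epoch, started from $\by_0 = \bx_0 = \bw_{t-1}$ with $\E[f(\by_0) - f(\bw^*)] \le \tfrac{LD^2}{2^t}$, outputs $\bw_t = \by_{N_t}$ with $\E[f(\by_{N_t}) - f(\bw^*)] \le \tfrac{LD^2}{2^{t+1}}$. Inside the epoch I condition on the natural filtration $\mathcal{F}_{k-1}$: the point $\bz_k$ is determined before the fresh mini‑batch $\tilde\nabla_k$ is drawn, so $\E[\tilde\nabla_k\mid\mathcal{F}_{k-1}] = \nabla f(\bz_k)$, and, writing $\delta_k := \tilde\nabla_k - \nabla f(\bz_k)$, Lemma~\ref{lem:variance} together with the $m_{t,k}$‑fold averaging gives
\[
\E[\|\delta_k\|^2] \;\le\; \frac{6L}{m_{t,k}}\bigl(2\,\E[f(\bz_k) - f(\bw^*)] + \E[f(\by_0) - f(\bw^*)]\bigr).
\]

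The heart of the argument is a conditional‑gradient‑sliding one‑step inequality. Using $L$‑smoothness along $\by_k - \bz_k = \gamma_k(\bx_k - \bx_{k-1})$, convexity of $f$ at $\bz_k$ to rewrite $\nabla f(\bz_k)^\top(\bw^*-\bx_{k-1})$ through $f(\by_{k-1})$ and $f(\bz_k)$, the $\beta_k$‑strong convexity of the prox objective $g$ with the duality‑gap stopping rule~\eqref{eq:gap} (which yields $\tilde\nabla_k^\top(\bx_k - \bw^*) \le \eta_{t,k} - \tfrac{\beta_k}{2}(\|\bx_k-\bx_{k-1}\|^2 + \|\bx_k-\bw^*\|^2 - \|\bx_{k-1}-\bw^*\|^2)$), and completing the square in $\bx_k - \bx_{k-1}$, one gets for the stated parameters (where $\beta_k - L\gamma_k = \tfrac{L(k+3)}{k(k+1)} > 0$)
\[
f(\by_k) - f(\bw^*) \;\le\; (1-\gamma_k)\bigl(f(\by_{k-1}) - f(\bw^*)\bigr) + \tfrac{\beta_k\gamma_k}{2}\bigl(\|\bx_{k-1}-\bw^*\|^2 - \|\bx_k-\bw^*\|^2\bigr) + \gamma_k\eta_{t,k} + \tfrac{\gamma_k\|\delta_k\|^2}{2(\beta_k - L\gamma_k)} + \gamma_k\,\delta_k^\top(\bx_{k-1}-\bw^*).
\]
Dividing by $\Gamma_k := \tfrac{2}{k(k+1)}$ (which satisfies $\Gamma_k = (1-\gamma_k)\Gamma_{k-1}$ for $k\ge 2$, with $\gamma_1 = 1$ erasing the $\by_0$ term), telescoping, and taking expectations — the last term vanishes because $\bx_{k-1}$ is $\mathcal{F}_{k-1}$‑measurable, and $\tfrac{\beta_k\gamma_k}{\Gamma_k} = 3L$ is constant so the prox distances telescope — one arrives at
\[
\E[f(\by_{N_t}) - f(\bw^*)] \;\le\; \Gamma_{N_t}\Bigl(\tfrac{3L}{2}\,\E\|\bx_0-\bw^*\|^2 + \sum_{k=1}^{N_t}\tfrac{1}{\Gamma_k}\bigl(\gamma_k\eta_{t,k} + \tfrac{\gamma_k\,\E\|\delta_k\|^2}{2(\beta_k - L\gamma_k)}\bigr)\Bigr).
\]
With the chosen $\eta_{t,k}$ one has $\tfrac{1}{\Gamma_k}\gamma_k\eta_{t,k} = \tfrac{2LD_t^2}{N_t}$, so the first two groups contribute $O(LD_t^2/N_t^2)$; using $\E\|\bx_0-\bw^*\|^2 \le D_t^2$ — the trivial diameter bound $D^2$ in cases~(a),(b), and $\tfrac{2}{\alpha}\E[f(\bw_{t-1})-f(\bw^*)] \le \tfrac{\mu D^2}{2^{t-1}}$ by strong convexity in case~(c) — and the definition of $N_t$, these already sit safely below $\tfrac{LD^2}{2^{t+1}}$. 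Since $\tfrac{1}{\Gamma_k}\cdot\tfrac{\gamma_k}{2(\beta_k-L\gamma_k)} = \tfrac{k^2(k+1)}{2L(k+3)} = \Theta(k^2/L)$, everything reduces to showing $\sum_{k=1}^{N_t} k^2\,\E\|\delta_k\|^2 = O(L^2D_t^2)$.

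The genuinely new ingredient, and the main obstacle, is that closing this demands, via the variance bound, an \emph{accelerated‑rate} estimate $\E[f(\bz_k) - f(\bw^*)] = O(LD_t^2/k^2)$ for the auxiliary sequence at which stochastic gradients are evaluated — a decay the naive bound $f(\bz_k) \le (1-\gamma_k)f(\by_{k-1}) + \gamma_k f(\bx_{k-1})$ fails to give, since $f(\bx_{k-1}) - f(\bw^*)$ need not be $o(1)$. The device I would use is the momentum representation $\bz_k = \by_{k-1} + \theta_k(\by_{k-1} - \by_{k-2})$, $\theta_k = \tfrac{\gamma_k(1-\gamma_{k-1})}{\gamma_{k-1}} = \tfrac{k-2}{k+1} \in [0,1)$ (valid within an epoch for $k\ge 2$, with $\bz_1 = \by_0$), which by smoothness and convexity bounds $f(\bz_k) - f(\bw^*)$ by $(1+\theta_k)(f(\by_{k-1})-f(\bw^*)) - \theta_k(f(\by_{k-2})-f(\bw^*)) + \tfrac{L\theta_k^2}{2}\|\by_{k-1}-\by_{k-2}\|^2$, with $\|\by_{k-1}-\by_{k-2}\| = \gamma_{k-1}\|\bx_{k-1}-\by_{k-2}\| \le \gamma_{k-1}D$; combined with the main inductive bound on $f(\by_j)-f(\bw^*)$ this gives the desired $O(LD_t^2/k^2)$. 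The three regimes then diverge only in how the residual quantities are controlled — using $\nabla f(\bw^*) = \bzero$ (hence $f(\bx_{k-1}) - f(\bw^*) \le \tfrac{L}{2}\|\bx_{k-1}-\bw^*\|^2$) in case~(a); $G$‑Lipschitzness of $f$ in case~(b), which costs an additional $O(GD/k)$‑type error that the $\tfrac{24N_tG(k+1)}{LD}$ addend in $m_{t,k}$ is sized to absorb; and $\alpha$‑strong convexity in case~(c), which both tightens $\E\|\bx_0-\bw^*\|^2$ to $D_t^2$ and keeps the radii $\|\bx_j-\bw^*\|,\|\by_j-\bw^*\| = O(D_t)$ via non‑increase of the accelerated potential. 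Because the variance bound needs the accelerated rate at $\bz_k$ while that rate in turn needs small variance at earlier steps, the inner induction on $k$ must be run simultaneously for $\E[f(\by_k)-f(\bw^*)]$, $\E[f(\bz_k)-f(\bw^*)]$ and $\E\|\delta_k\|^2$; at the end one verifies that the prescribed constants in $\eta_{t,k}$, $m_{t,k}$, $N_t$, $D_t$ close the whole recursion in each case. This auxiliary‑sequence estimate is the conceptual crux; the rest is bookkeeping of absolute constants.
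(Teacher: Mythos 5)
Your overall architecture coincides with the paper's: an outer induction over epochs, an inner induction inside each epoch driven by the Lan--Zhou one-step inequality (your telescoped display is exactly the paper's Lemma~\ref{lem:STORC}, proved in Appendix~\ref{app:STORC} with the same completion of the square, the same constant weight $\gamma_k\beta_k/\Gamma_k=3L$, and the same martingale cancellation of $\bdelta_k^\top(\bx_{k-1}-\bw^*)$), and you correctly isolate the crux as an accelerated-rate bound on $\E[f(\bz_k)-f(\bw^*)]$. Where you genuinely diverge is in how that bound is obtained. The paper uses three separate arguments: for case (a) it expands $f(\bz_s)$ around $\by_{s-1}$ along $\gamma_s(\bx_{s-1}-\by_{s-1})$ and controls the cross term via $\|\nabla f(\by_{s-1})-\nabla f(\bw^*)\|\le\sqrt{2L(f(\by_{s-1})-f(\bw^*))}$ (Property~\eqref{eq:smoothness1} plus $\nabla f(\bw^*)=\bzero$); for case (b) it bounds the same cross term by $\gamma_s GD$, which decays only as $1/s$ and is what forces the $G$-dependent addend in $m_{t,k}$; only in case (c) does it use the identity $\by_{s-1}=\frac{s+1}{2s-1}\bz_s+\frac{s-2}{2s-1}\by_{s-2}$, which is precisely your momentum representation $\bz_k=\by_{k-1}+\theta_k(\by_{k-1}-\by_{k-2})$. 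You instead propose this identity uniformly. This is a legitimate and arguably cleaner route: since $\|\by_{k-1}-\by_{k-2}\|=\gamma_{k-1}\|\bx_{k-1}-\by_{k-2}\|\le 2D/k$, the displacement of $\bz_k$ from $\by_{k-1}$ already decays, and combined with the inner inductive hypothesis it yields $\E[f(\bz_k)-f(\bw^*)]=\scO(LD^2/k^2)$ with no appeal to $\nabla f(\bw^*)=\bzero$ or to Lipschitzness at all --- so if you carry your own device through, cases (a) and (b) collapse into one and the $G(k+1)$ term in $m_{t,k}$ becomes unnecessary; your text hedges on this by still describing the paper's case-specific parameter choices. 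For case (c) the crude $\gamma_{k-1}D$ bound is not enough because the target scale is $D_t^2=\mu D^2/2^{t-1}\ll D^2$; there you must convert $\|\by_{k-1}-\by_{k-2}\|^2$ into function gaps via strong convexity (as the paper does, giving the extra factor $\mu$ and hence $m_{t,k}\propto\mu N_t$) --- your "non-increase of the accelerated potential" gestures at a workable but more delicate alternative that you do not carry out.

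One concrete error to fix: your inequality
\[ f(\bz_k)-f(\bw^*)\le(1+\theta_k)\bigl(f(\by_{k-1})-f(\bw^*)\bigr)-\theta_k\bigl(f(\by_{k-2})-f(\bw^*)\bigr)+\tfrac{L\theta_k^2}{2}\norm{\by_{k-1}-\by_{k-2}}^2 \]
cannot be obtained from "smoothness and convexity." Convexity gives $\nabla f(\by_{k-1})^\top(\by_{k-1}-\by_{k-2})\ge f(\by_{k-1})-f(\by_{k-2})$, which is the wrong direction for the cross term (whose coefficient $\theta_k$ is nonnegative). You need the smoothness \emph{upper} bound a second time, $\nabla f(\by_{k-1})^\top(\by_{k-1}-\by_{k-2})\le f(\by_{k-1})-f(\by_{k-2})+\tfrac{L}{2}\norm{\by_{k-1}-\by_{k-2}}^2$ (equivalently, Property~\eqref{eq:smoothness2} applied to the convex combination, as the paper does), which changes the quadratic coefficient to $\tfrac{L}{2}\theta_k(1+\theta_k)\le L$. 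Since the term is still $\scO(LD^2/k^2)$, this only perturbs absolute constants and does not break the argument.
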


Again we first give a direct implication of the above result:
\begin{cor}
To achieve $1-\epsilon$ accuracy,
Algorithm~\ref{alg:STORC} requires $\scO(\ln\frac{LD^2}{\epsilon})$ exact gradient evaluations and $\scO(\frac{LD^2}{\epsilon})$ linear optimizations. 
The numbers of stochastic gradient evaluations for Case (a), (b) and (c) are respectively $\scO(\frac{LD^2}{\epsilon})$, $\scO(\frac{LD^2}{\epsilon} + \frac{\sqrt{L}D^2G}{\epsilon^{1.5}})$ and $\scO(\mu^2 \ln \frac{LD^2}{\epsilon})$.
\end{cor}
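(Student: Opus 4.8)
The plan is to mirror the proof of Theorem~\ref{thm:SVRF}: an outer induction on $t$, with base case identical to that theorem (one Frank--Wolfe step plus smoothness and optimality of $\bw_0$ gives $\E[f(\bw_0)-f(\bw^*)]\le\tfrac{LD^2}{2}$), and an inductive step in which, under the hypothesis $\E[f(\bw_{t-1})-f(\bw^*)]\le\tfrac{LD^2}{2^t}$, we fix epoch $t$ (so $\bx_0=\by_0=\bw_{t-1}$) and analyze one run of the inner loop. For the inner loop I would first derive the usual accelerated (Nesterov/CGS) one-step bound: the $\beta_k$-strong convexity of $g$ together with the stopping rule~\eqref{eq:gap} gives $\tilde\nabla_k^\top(\bx_k-\bx)\le\tfrac{\beta_k}{2}(\|\bx-\bx_{k-1}\|^2-\|\bx_k-\bx_{k-1}\|^2-\|\bx_k-\bx\|^2)+\eta_{t,k}$ for all $\bx\in\Omega$; combining this with the smoothness upper bound at $\bz_k$ (using $\by_k-\bz_k=\gamma_k(\bx_k-\bx_{k-1})$), the identity $\gamma_k(\bx-\bx_{k-1})=\big((1-\gamma_k)\by_{k-1}+\gamma_k\bx\big)-\bz_k$, convexity of $f$, the choice $\bx=\bw^*$, taking expectations (the term linear in $\delta_k:=\nabla f(\bz_k)-\tilde\nabla_k$ against the measurable vector $\bx_{k-1}-\bw^*$ vanishes by conditional unbiasedness), and $ar-br^2\le\tfrac{a^2}{4b}$ (legitimate since $\beta_k>L\gamma_k$) to absorb $\gamma_k\,\delta_k^\top(\bx_k-\bx_{k-1})$ into the negative quadratic term, one obtains
\begin{equation*}
\E[f(\by_k)-f(\bw^*)]\le(1-\gamma_k)\E[f(\by_{k-1})-f(\bw^*)]+\tfrac{\gamma_k\beta_k}{2}\E\big[\|\bw^*-\bx_{k-1}\|^2-\|\bw^*-\bx_k\|^2\big]+\gamma_k\eta_{t,k}+\tfrac1L\E\|\delta_k\|^2 .
\end{equation*}

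Dividing by $\Gamma_k:=\tfrac{2}{k(k+1)}$ (so that $\Gamma_k=(1-\gamma_k)\Gamma_{k-1}$) makes $\tfrac{\gamma_k\beta_k}{2\Gamma_k}=\tfrac{3L}{2}$ and $\tfrac{\gamma_k}{\Gamma_k}\eta_{t,k}=\tfrac{2LD_t^2}{N_t}$ constant, so summing and telescoping yields, for all $k\le N_t$ and an absolute constant $c$,
\begin{equation*}
\E[f(\by_k)-f(\bw^*)]\le\Gamma_k\Big(c\,L\|\by_0-\bw^*\|^2+\sum_{s=1}^{k}\tfrac{s(s+1)}{2L}\,\E\|\delta_s\|^2\Big),\qquad \E\|\bw^*-\bx_k\|^2=O\big(\|\by_0-\bw^*\|^2\big).
\end{equation*}
The heart of the matter, and the step I expect to be the main obstacle, is controlling $\sum_s s(s+1)\,\E\|\delta_s\|^2$: since $\sum_{s\le N_t}s(s+1)=\Theta(N_t^3)$ and $\Gamma_{N_t}\cdot LD_t^2$ must come out $\le\tfrac{LD^2}{2^{t+1}}$, the minibatch must drive the variance of $\tilde\nabla f(\bz_s;\by_0)$ down to roughly the \emph{target} accuracy at epoch $t$. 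Because $\tilde\nabla_s$ is evaluated at the query point $\bz_s=(1-\gamma_s)\by_{s-1}+\gamma_s\bx_{s-1}$, Lemma~\ref{lem:variance} then demands a bound on $\E[f(\bz_s)-f(\bw^*)]$ at the \emph{same} accelerated $O(LD_t^2/s^2)$ rate as $\by_s$ -- i.e.\ convergence of the auxiliary Nesterov sequence, the point flagged in the introduction -- which I would prove by a nested induction carrying the bounds on $\E[f(\by_s)-f(\bw^*)]$, $\E\|\bx_s-\bw^*\|^2$ and $\E[f(\bz_s)-f(\bw^*)]$ jointly. The key new estimate is $\|\bz_s-\by_{s-1}\|=\gamma_s\|\bx_{s-1}-\by_{s-1}\|\le\gamma_s D$, so smoothness gives $f(\bz_s)-f(\by_{s-1})\le\gamma_s\nabla f(\by_{s-1})^\top(\bx_{s-1}-\by_{s-1})+\tfrac{L}{2}\gamma_s^2D^2$, and the linear term is handled case by case.

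The three cases differ exactly here, which is also what determines the minibatch sizes and the radius $D_t$. In case (a), $\nabla f(\bw^*)=\bzero$ together with~\eqref{eq:smoothness1} gives $\|\nabla f(\by_{s-1})\|\le\sqrt{2L(f(\by_{s-1})-f(\bw^*))}=O(LD/s)$ in expectation, hence $\E[f(\bz_s)-f(\bw^*)]=O(LD^2/s^2)$; with $D_t=D$, $N_t=\Theta(2^{t/2})$, $m_{t,k}=\Theta(N_t)$ the sum closes. In case (b), $f$ being $G$-Lipschitz only yields $\|\nabla f(\by_{s-1})\|\le G$, so $\E[f(\bz_s)-f(\bw^*)]=O(LD^2/s^2+GD/s)$; the extra summand $\tfrac{24N_tG(k+1)}{LD}$ in $m_{t,k}$ is precisely calibrated to absorb the $GD/s$ term, at the cost of the $\epsilon^{-1/2}$ factor. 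In case (c), strong convexity lets one bypass function values: $\|\by_{s-1}-\bw^*\|^2\le\tfrac2\alpha(f(\by_{s-1})-f(\bw^*))=O(\mu D_t^2/s^2)$ and $\E\|\bx_{s-1}-\bw^*\|^2=O(D_t^2)$ give $\E\|\bz_s-\by_0\|^2=O(\mu D_t^2/s^2)$, and the direct variance bound $\E\|\delta_s\|^2\le\tfrac{L^2}{m_{t,s}}\E\|\bz_s-\by_0\|^2$ then suffices with $N_t=\Theta(\sqrt\mu)$, $m_{t,k}=\Theta(\mu N_t)$ (hence $\Theta(\ln\tfrac1\epsilon)$ epochs and $\Theta(\mu^2\ln\tfrac1\epsilon)$ stochastic gradients total); here the geometric shrinkage $D_t^2=\tfrac{\mu D^2}{2^{t-1}}\ge\|\bw_{t-1}-\bw^*\|^2$ is exactly what strong convexity plus the outer hypothesis provide. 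In every case the inner bound gives $\E[f(\bw_t)-f(\bw^*)]=\E[f(\by_{N_t})-f(\bw^*)]\le\Gamma_{N_t}\cdot O(LD_t^2)\le\tfrac{LD^2}{2^{t+1}}$ once $N_t$ is chosen large enough relative to $D_t^2/D^2$, closing the outer induction; checking that the explicit constants ($900,700,5600,\dots$, in the spirit of the $96$ in Theorem~\ref{thm:SVRF}) make each inequality go through is then routine bookkeeping.
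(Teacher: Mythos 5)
Your proposal is almost entirely a re-derivation of Theorem~\ref{thm:STORC} (the convergence guarantee), which the corollary is entitled to take as given; the corollary itself is a resource-counting statement, and the counting is where your argument has a genuine gap. Specifically, you never bound the number of \emph{linear optimizations}. Line~11 of Algorithm~\ref{alg:STORC} does not perform a single linear optimization per iteration $k$: it runs the standard Frank--Wolfe method on the surrogate $g(\bx)=\frac{\beta_k}{2}\|\bx-\bx_{k-1}\|^2+\tilde\nabla_k^\top\bx$ until the duality gap drops below $\eta_{t,k}$, and \emph{each} of those inner iterations is one linear optimization. You use the stopping rule~\eqref{eq:gap} in your one-step bound but never ask how many inner steps are needed to reach it. The missing ingredient is the standard duality-gap convergence rate for Frank--Wolfe on a $\beta_k$-smooth objective over a diameter-$D$ domain, giving $\scO(\beta_k D^2/\eta_{t,k})$ inner iterations, followed by the sum
\[
\sum_{t=1}^{T}\sum_{k=1}^{N_t}\frac{\beta_k D^2}{\eta_{t,k}}=\sum_{t=1}^{T}\sum_{k=1}^{N_t}\frac{(3L/k)D^2}{2LD_t^2/(N_tk)}=\scO\Big(\sum_{t=1}^{T}\frac{N_t^2D^2}{D_t^2}\Big)=\scO(2^T)=\scO\big(\tfrac{LD^2}{\epsilon}\big),
\]
which holds in all three cases (in case (c), $N_t^2/D_t^2=\Theta(2^t/D^2)$ as well). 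Without this step the claimed $\scO(LD^2/\epsilon)$ bound on linear optimizations is unsupported. Secondarily, the stochastic-gradient counts for cases (a) and (b) require carrying out $\sum_{t,k}m_{t,k}$ explicitly (giving $\scO(\sum_t N_t^2)=\scO(2^T)$ and $\scO(\sum_t N_t^2+\sum_t N_t^3G/(LD))$ respectively); you only state the count for case (c). These sums are easy, but they, together with the inner Frank--Wolfe iteration count, \emph{are} the proof of the corollary.

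On the portion you did write: your sketch of the theorem follows the paper's structure (the accelerated one-step recursion, telescoping against $\Gamma_k$, and the epoch-wise induction with a case analysis bounding $\E[f(\bz_s)-f(\bw^*)]$), and your case (c) takes a mildly different and arguably cleaner route, bounding $\E\|\tilde\nabla_s-\nabla f(\bz_s)\|^2\le\frac{L^2}{m_{t,s}}\E\|\bz_s-\by_0\|^2$ directly from $L$-Lipschitzness of each $\nabla f_i$ rather than passing through Lemma~\ref{lem:variance} and the concavity-type inequality~\eqref{eq:smoothness2}; this works because $s\le N_t=\Theta(\sqrt\mu)$ keeps the $\mu D_t^2/s^2$ term dominant. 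But none of that substitutes for the oracle-call accounting that the corollary actually asserts.
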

\begin{proof}
Line~11 requires $\scO(\frac{\beta_k D^2}{\eta_{t,k}})$ iterations of the standard Frank-Wolfe algorithm since $g(\bx)$ is $\beta_k$-smooth (see e.g. \citep[Theorem 2]{Jaggi13}). So the numbers of exact gradient evaluations, stochastic gradient evaluations and linear optimizations are respectively $T+1$,
$ \sum_{t=1}^T \sum_{k=1}^{N_t} m_{t,k}$ and $\scO(\sum_{t=1}^T \sum_{k=1}^{N_t} \frac{\beta_k D^2}{\eta_{t,k}})$.
Theorem~\ref{thm:STORC} implies that $T$ should be of order $\Theta(\log_2\frac{LD^2}{\epsilon})$. 
Plugging in all parameters proves the corollary.
\end{proof}

To prove Theorem~\ref{thm:STORC},  we again first consider a fixed iteration $t$ and use the following lemma,
which is essentially proven in~\citep{LanZh14}. 
We include a distilled proof in Appendix~\ref{app:STORC} for completeness.

\begin{lemma}\label{lem:STORC}
Suppose $\E[\|\by_{0} - \bw^*\|^2] \leq D_t^2$ holds for some positive constant $D_t \leq D$.
Then for any $k$,  we have 
\[ \E[f(\by_k) - f(\bw^*)] \leq \frac{8LD_t^2}{k(k+1)} \] 
if $ \E[\| \tilde\nabla_s - \nabla f(\bz_s)\|^2 ] \leq \frac{L^2 D_t^2}{N_t(s+1)^2}$ for all $s \leq k$.
\end{lemma}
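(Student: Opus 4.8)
The plan is to run the standard accelerated / conditional-gradient-sliding telescoping analysis, following~\citep{LanZh14}, with the stochastic-gradient error arranged so that its martingale part vanishes in expectation. Fix the outer index $t$, so everything below refers to a single pass of the inner loop with $\bx_0=\by_0$, and let $\mathcal{F}_{j-1}$ be the $\sigma$-algebra generated by the randomness through the construction of $\bx_{j-1}$ and $\by_{j-1}$; since $\tilde\nabla_j$ averages fresh i.i.d.\ variance-reduced gradients and $\bz_j=(1-\gamma_j)\by_{j-1}+\gamma_j\bx_{j-1}$ is $\mathcal{F}_{j-1}$-measurable, $\E[\tilde\nabla_j\mid\mathcal{F}_{j-1}]=\nabla f(\bz_j)$.

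First I establish a one-step inequality. From $\by_j-\bz_j=\gamma_j(\bx_j-\bx_{j-1})$ and $L$-smoothness of $f$ at $\bz_j$, then convexity of $f$ at $\bz_j$ tested against $\bu_j:=(1-\gamma_j)\by_{j-1}+\gamma_j\bw^*\in\Omega$ (for which $\bz_j-\bu_j=\gamma_j(\bx_{j-1}-\bw^*)$ and $f(\bu_j)\le(1-\gamma_j)f(\by_{j-1})+\gamma_j f(\bw^*)$), and merging the two resulting inner products into $\gamma_j\nabla f(\bz_j)^\top(\bx_j-\bw^*)$, one gets
\[ f(\by_j)-f(\bw^*)\le(1-\gamma_j)\bigl(f(\by_{j-1})-f(\bw^*)\bigr)+\gamma_j\nabla f(\bz_j)^\top(\bx_j-\bw^*)+\tfrac{L\gamma_j^2}{2}\|\bx_j-\bx_{j-1}\|^2. \]
Now write $\nabla f(\bz_j)=\tilde\nabla_j+(\nabla f(\bz_j)-\tilde\nabla_j)$. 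For the $\tilde\nabla_j$ part, the duality-gap certificate~\eqref{eq:gap} at $\bx=\bw^*$ gives $\bigl(\beta_j(\bx_j-\bx_{j-1})+\tilde\nabla_j\bigr)^\top(\bx_j-\bw^*)\le\eta_{t,j}$, which together with the identity $(\bx_j-\bx_{j-1})^\top(\bx_j-\bw^*)=\tfrac12(\|\bx_j-\bx_{j-1}\|^2+\|\bx_j-\bw^*\|^2-\|\bx_{j-1}-\bw^*\|^2)$ yields a three-point bound. For the error part, split $(\nabla f(\bz_j)-\tilde\nabla_j)^\top(\bx_j-\bw^*)=(\nabla f(\bz_j)-\tilde\nabla_j)^\top(\bx_j-\bx_{j-1})+(\nabla f(\bz_j)-\tilde\nabla_j)^\top(\bx_{j-1}-\bw^*)$: the second summand has zero $\mathcal{F}_{j-1}$-conditional expectation (since $\bx_{j-1},\bz_j$ are $\mathcal{F}_{j-1}$-measurable), and the first is controlled by Young's inequality with parameter $\beta_j-L\gamma_j>0$, whose quadratic part is $\tfrac{\gamma_j(\beta_j-L\gamma_j)}{2}\|\bx_j-\bx_{j-1}\|^2$. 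This parameter is chosen precisely so that the three $\|\bx_j-\bx_{j-1}\|^2$ contributions — $+\tfrac{L\gamma_j^2}{2}$ from smoothness, $-\tfrac{\gamma_j\beta_j}{2}$ from the three-point bound, $+\tfrac{\gamma_j(\beta_j-L\gamma_j)}{2}$ from Young — cancel exactly, leaving, with $\xi_j:=\gamma_j(\nabla f(\bz_j)-\tilde\nabla_j)^\top(\bx_{j-1}-\bw^*)$,
\[ f(\by_j)-f(\bw^*)+\tfrac{\gamma_j\beta_j}{2}\|\bx_j-\bw^*\|^2\le(1-\gamma_j)\bigl(f(\by_{j-1})-f(\bw^*)\bigr)+\tfrac{\gamma_j\beta_j}{2}\|\bx_{j-1}-\bw^*\|^2+\gamma_j\eta_{t,j}+\tfrac{\gamma_j}{2(\beta_j-L\gamma_j)}\|\nabla f(\bz_j)-\tilde\nabla_j\|^2+\xi_j. \]

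Next I telescope with the weights $\Gamma_j:=\tfrac{2}{j(j+1)}$, for which $\Gamma_1=1=\gamma_1$, $\tfrac{1-\gamma_j}{\Gamma_j}=\tfrac1{\Gamma_{j-1}}$, and — crucially — $\tfrac{\gamma_j\beta_j}{\Gamma_j}=3L$ is independent of $j$. Dividing the step-$j$ inequality by $\Gamma_j$, summing over $j=1,\dots,k$ (the $f(\by_0)-f(\bw^*)$ term drops because $\gamma_1=1$), taking expectations so that $\E[\xi_j]=0$, using $\E\|\bx_0-\bw^*\|^2=\E\|\by_0-\bw^*\|^2\le D_t^2$, and discarding the nonnegative $\tfrac{3L}{2}\E\|\bx_k-\bw^*\|^2$ term gives
\[ \frac{\E[f(\by_k)-f(\bw^*)]}{\Gamma_k}\le\frac{3L}{2}D_t^2+\sum_{j=1}^{k}\frac{\gamma_j\eta_{t,j}}{\Gamma_j}+\sum_{j=1}^{k}\frac{\gamma_j}{2(\beta_j-L\gamma_j)\Gamma_j}\,\E\|\nabla f(\bz_j)-\tilde\nabla_j\|^2. \]
Since $\tfrac{\gamma_j}{\Gamma_j}=j$ and $\eta_{t,j}=\tfrac{2LD_t^2}{N_t j}$, the first sum equals $\tfrac{2kLD_t^2}{N_t}$; since $\beta_j-L\gamma_j=\tfrac{L(j+3)}{j(j+1)}$, each term of the second sum is at most $\tfrac{j^2}{(j+1)(j+3)}\cdot\tfrac{LD_t^2}{2N_t}\le\tfrac{LD_t^2}{2N_t}$ by the hypothesis, so the second sum is at most $\tfrac{kLD_t^2}{2N_t}$. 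Multiplying by $\Gamma_k=\tfrac{2}{k(k+1)}$ and using $k\le N_t$ in the two sums yields $\E[f(\by_k)-f(\bw^*)]\le\tfrac{3LD_t^2+4LD_t^2+LD_t^2}{k(k+1)}=\tfrac{8LD_t^2}{k(k+1)}$, as claimed.

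The one genuinely delicate step is the treatment of the stochastic-gradient error in the one-step inequality: because $\bx_j$ depends on $\tilde\nabla_j$, one cannot take expectations before isolating a martingale-difference term, which is why the error inner product must be split around the $\mathcal{F}_{j-1}$-measurable point $\bx_{j-1}$; and the Young parameter must be exactly $\beta_j-L\gamma_j$ (forcing $\beta_j>L\gamma_j$) so that the $\|\bx_j-\bx_{j-1}\|^2$ terms vanish. These two constraints, together with the requirement that $\gamma_j\beta_j/\Gamma_j$ be constant for the telescoping to close, essentially dictate the choices $\gamma_j=\tfrac{2}{j+1}$, $\beta_j=\tfrac{3L}{j}$, $\eta_{t,j}=\tfrac{2LD_t^2}{N_t j}$; everything else (the smoothness and convexity steps, the three-point identity, the telescoping arithmetic) is routine accelerated-method bookkeeping.
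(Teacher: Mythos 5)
Your proposal is correct and follows essentially the same route as the paper's proof in Appendix~C (itself distilled from \citealp{LanZh14}): smoothness at $\bz_s$ plus convexity to isolate $\gamma_s\nabla f(\bz_s)^\top(\bx_s-\bw^*)$, the duality-gap certificate~\eqref{eq:gap} at $\bw^*$ with the three-point identity, splitting the gradient-error inner product around the measurable point $\bx_{s-1}$ so the martingale part vanishes and the remainder is absorbed via the $\beta_s-L\gamma_s$ completion of the square, and telescoping with $\Gamma_s=\tfrac{2}{s(s+1)}$. Your constant accounting ($3+4+1=8$, using $k\le N_t$) matches the paper's, so there is nothing to add.
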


\begin{proof}[Proof of Theorem~\ref{thm:STORC}]
We prove by induction. The base case $t=0$ holds by the exact same argument as in the proof of Theorem~\ref{thm:SVRF}.
Suppose $\E[f(\bw_{t-1}) - f(\bw^*)] \leq \frac{LD^2}{2^{t}}$ and consider iteration $t$. 
Below we use another induction to prove $\E[f(\by_k) - f(\bw^*)] \leq \frac{8LD_t^2}{k(k+1)}$ for any $1 \leq k \leq N_t$, which will concludes the proof since for any of the three cases, we have
$ \E[f(\bw_{t}) - f(\bw^*)] =  \E[f(\by_{N_t}) - f(\bw^*)]$ which is at most $\frac{8LD_t^2}{N_t^2} \leq \frac{LD^2}{2^{t+1}}$.

We first show that the condition $\E[\|\by_{0} - \bw^*\|^2] \leq D_t^2$ holds.
This is trivial for Cases (a) and (b) when $D_t = D$. 
For Case (c), by strong convexity and the inductive assumption, we have
$\E[\|\by_{0} - \bw^*\|^2] \leq \frac{2}{\alpha} \E[f(\by_0) - f(\bw^*)] \leq \frac{LD^2}{\alpha 2^{t-1}} = D_t^2$.

Next note that Lemma~\ref{lem:variance} implies that $\E[\| \tilde\nabla_s - \nabla f(\bz_s)\|^2 ]$ is at most 
$\frac{6L}{m_{t,s}}(2\E[f(\bz_{s}) - f(\bw^*)] + \E[f(\by_0) - f(\bw^*)])$.
So the key is to bound $\E[f(\bz_{s}) - f(\bw^*)]$.
With $\bz_1 = \by_0$ one can verify that
$ \E[\| \tilde\nabla_1 - \nabla f(\bz_1)\|^2 ]$ is at most $\frac{18L}{m_{t,1}} \E[f(\by_{0}) - f(\bw^*)] \leq \frac{18L^2D^2}{m_{t,1} 2^t}
\leq \frac{L^2 D_t^2}{4N_t }$ for all three cases, and thus $\E[f(\by_s) - f(\bw^*)] \leq \frac{8LD_t^2}{s(s+1)}$ holds for $s=1$ by Lemma~\ref{lem:STORC}.
Now suppose it holds for any $s < k$, below we discuss the three cases separately to show that it also holds for $s = k$.
  
\paragraph{Case (a).}
By smoothness, the condition $\nabla f(\bw^*) = 0$, the construction of $\bz_s$, and Cauchy-Schwarz inequality, we have for any $1 < s \leq k$,
\begin{align*}
 &f(\bz_s) \leq f(\by_{s-1}) + (\nabla f(\by_{s-1}) - \nabla f(\bw^*))^\top(\bz_s - \by_{s-1}) \\
 & \quad\quad + \frac{L}{2} \|\bz_s - \by_{s-1}\|^2  \\
&=  f(\by_{s-1}) + \gamma_s (\nabla f(\by_{s-1}) - \nabla f(\bw^*))^\top(\bx_{s-1} - \by_{s-1}) \\
& \quad\quad + \frac{L\gamma_s^2}{2} \|\bx_{s-1} - \by_{s-1}\|^2  \\
& \leq  f(\by_{s-1}) + \gamma_s D \|\nabla f(\by_{s-1}) - \nabla f(\bw^*)\| + \frac{LD^2\gamma_s^2}{2}. \\
\end{align*}

Property~\eqref{eq:smoothness1} and the optimality of $\bw^*$ implies: 
\begin{align*}
&\|\nabla f(\by_{s-1}) - \nabla f(\bw^*)\|^2  \\
\leq\;& 2L(f(\by_{s-1}) - f(\bw^*) - \nabla f(\bw^*)^\top(\by_{s-1} - \bw^*))  \\
\leq\;& 2L(f(\by_{s-1}) - f(\bw^*) ).
\end{align*}
So subtracting $f(\bw^*)$ and taking expectation on both sides, and applying Jensen's inequality and the inductive assumption, we have
\begin{align*}
&\E[f(\bz_{s}) - f(\bw^*)]  \\
\leq\;& \E[f(\by_{s-1}) - f(\bw^*)] + \gamma_s D \sqrt{2L \E[f(\by_{s-1}) - f(\bw^*)] } \\
& \quad\quad + \frac{2LD^2}{(s+1)^2} \\
\leq\;& \frac{8LD^2}{(s-1)s} + \frac{8LD^2}{(s+1)\sqrt{(s-1)s}} + \frac{2LD^2}{(s+1)^2} < \frac{55LD^2}{(s+1)^2}.
\end{align*}
On the other hand, we have $\E[f(\by_0) - f(\bw^*)]  \leq \frac{LD^2}{2^t} \leq \frac{16LD^2}{(N_t - 1)^2} < \frac{40LD^2}{(N_t+1)^2} \leq \frac{40LD^2}{(s+1)^2}$.
So $\E[\| \tilde\nabla_s - \nabla f(\bz_s)\|^2$ is at most $\frac{900L^2D^2}{m_{t,s} (s+1)^2}$,
and the choice of $m_{t,s}$ ensures that this bound is at most $\frac{L^2 D^2}{N_t(s+1)^2}$, satisfying the condition of Lemma~\ref{lem:STORC} and thus completing the induction.

\paragraph{Case (b).}
With the $G$-Lipschitz condition we proceed similarly and bound $f(\bz_s)$ by
\begin{align*}
 &f(\by_{s-1}) + \nabla f(\by_{s-1})^\top(\bz_s - \by_{s-1}) + \frac{L}{2} \|\bz_s - \by_{s-1}\|^2 \\
=\;&  f(\by_{s-1}) + \gamma_s \nabla f(\by_{s-1})^\top(\bx_{s-1} - \by_{s-1}) + \frac{LD^2\gamma_s^2}{2}  \\
\leq\;&  f(\by_{s-1}) + \gamma_s GD + \frac{LD^2\gamma_s^2}{2}. 
\end{align*}
So using bounds derived previously and the choice of $m_{t,s}$, we bound $\E[\| \tilde\nabla_s - \nabla f(\bz_s)\|^2$ as follows:
\begin{align*}
\frac{6L}{m_{t,s}}\left(\frac{16LD^2}{(s-1)s} + \frac{4GD}{s+1} + \frac{4LD^2}{(s+1)^2} + \frac{40LD^2}{(s+1)^2}\right)  \\
< \frac{6L}{m_{t,s}}\left(\frac{4GD}{s+1} + \frac{116LD^2}{(s+1)^2}\right) < \frac{L^2 D^2}{N_t(s+1)^2}, 
\end{align*}
again completing the induction.

\paragraph{Case (c).}
Using the definition of $\bz_s$ and $\by_s$ and direct calcalution, one can remove the dependence of $\bx_s$ and verify 
\[ \by_{s-1} = \frac{s+1}{2s-1} \bz_s + \frac{s-2}{2s-1}\by_{s-2} \]
for any $s \geq 2$.
Now we apply Property~\eqref{eq:smoothness2} with $\lambda = \frac{s+1}{2s-1}$:
\begin{align*}
&f(\by_{s-1}) \geq \frac{s+1}{2s-1} f(\bz_s) + \frac{s-2}{2s-1}f(\by_{s-2}) \\
& \quad\quad\quad - \frac{L}{2} \frac{(s+1)(s-2)}{(2s-1)^2} \|\bz_s - \by_{s-2} \|^2 \\
&= f(\bw^*) + \frac{s+1}{2s-1} (f(\bz_s) - f(\bw^*)) + \\
&\quad \frac{s-2}{2s-1} (f(\by_{s-2}) - f(\bw^*))  - \frac{L(s-2)}{2(s+1)} \|\by_{s-1} - \by_{s-2} \|^2 \\
&\geq f(\bw^*) + \frac{1}{2} (f(\bz_s) - f(\bw^*)) - \frac{L}{2} \|\by_{s-1} - \by_{s-2} \|^2,
\end{align*}
where the equality is by adding and subtracting $f(\bw^*)$ and the fact $\by_{s-1} - \by_{s-2} = \frac{s+1}{2s-1}(\bz_{s} - \by_{s-2})$,
and the last inequality is by $f(\by_{s-2}) \geq f(\bw^*)$ and trivial relaxations.

Rearranging gives $f(\bz_s) - f(\bw^*) \leq 2(f(\by_{s-1} - f(\bw^*)) +  L \|\by_{s-1} - \by_{s-2} \|^2$.
Applying Cauchy-Schwarz inequality, strong convexity and the fact $\mu \geq 1$, we continue with
\begin{align*}
&\quad f(\bz_s) - f(\bw^*) \\
&\leq 2 (f(\by_{s-1} - f(\bw^*)) \\
&\quad\quad\quad + 2L (\|\by_{s-1} - \bw^* \|^2  + \|\by_{s-2} - \bw^* \|^2 ) \\
&\leq 2 (f(\by_{s-1} - f(\bw^*))  \\
&\quad\quad\quad + 4\mu ( f(\by_{s-1}) - f(\bw^*)  + f(\by_{s-2}) - f(\bw^*) ) \\ 
&\leq 6\mu (f(\by_{s-1} - f(\bw^*))  + 4\mu (f(\by_{s-2}) - f(\bw^*) ),
\end{align*}
For $s \geq 3$, we use the inductive assumption to show
$\E[f(\bz_s) - f(\bw^*) ] \leq \frac{48\mu LD_t^2}{(s-1)s} + \frac{32\mu LD_t^2}{(s-2)(s-1)} \leq \frac{448 \mu LD_t^2}{(s+1)^2}$.
The case for $s=2$ can be verified similarly using the bound on $\E[f(\by_{0}) - f(\bw^*)]$ and $\E[f(\by_{1}) - f(\bw^*)]$ (base case).
Finally we bound the term 
$ \E[f(\by_{0}) - f(\bw^*)] \leq \frac{LD^2}{2^t} = \frac{LD_t^2}{2\mu} \leq \frac{32LD_t^2}{(N_t+1)^2} \leq \frac{32LD_t^2}{(s+1)^2}$,
and conclude that the variance $\E[\| \tilde\nabla_s - \nabla f(\bz_s)\|^2$ is at most
$\frac{6L}{m_{t,s}} (\frac{896 \mu LD_t^2}{(s+1)^2} + \frac{32LD_t^2}{(s+1)^2}) \leq \frac{L^2D_t^2}{N_t(s+1)^2}$,
completing the induction by Lemma~\ref{lem:STORC}.
\end{proof}

\section{Experiments}\label{sec:experiments}
To support our theory, we conduct experiments in the multiclass classification problem mentioned in Sec~\ref{subsec:multiclass}.
Three datasets are selected from the LIBSVM repository\footnote{\url{https://www.csie.ntu.edu.tw/~cjlin/libsvmtools/datasets/}} with relatively large number of features, categories and examples, summarized in the Table~\ref{tab:data}.

\begin{table}[t]
\begin{center}
\begin{tabular}{|c|c|c|c|}
\hline
\textbf{dataset} & \#\textbf{features} & \#\textbf{categories} & \#\textbf{examples} \\
\hline
news20 & 62,061  & 20 &	15,935 \\		
\hline
rcv1 & 47,236 & 53 & 15,564	\\		
\hline
aloi & 128 & 1,000 & 108,000 \\
\hline
\end{tabular}
\end{center}
\caption{Summary of datasets}
\label{tab:data} 
\end{table}

\begin{figure*}[t]
\centering
\begin{minipage}{.28\hsize}
\centering
\vskip -.6in
\includegraphics[width=\textwidth]{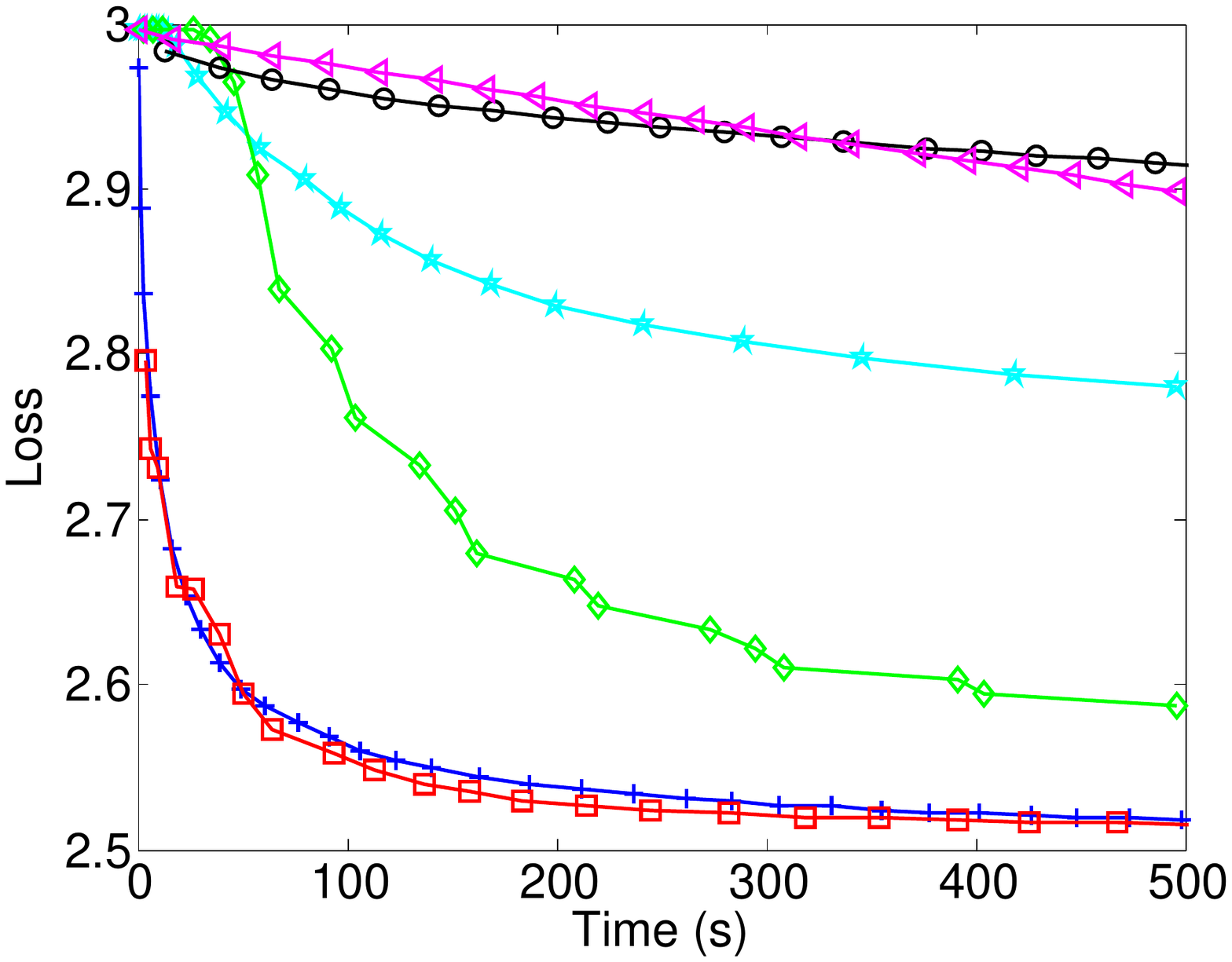}
\vskip -.6in
\captionsetup{labelformat=empty}
\caption{(a) news20}
\end{minipage}
\begin{minipage}{.28\hsize}
\centering
\vskip -.6in
\includegraphics[width=\textwidth]{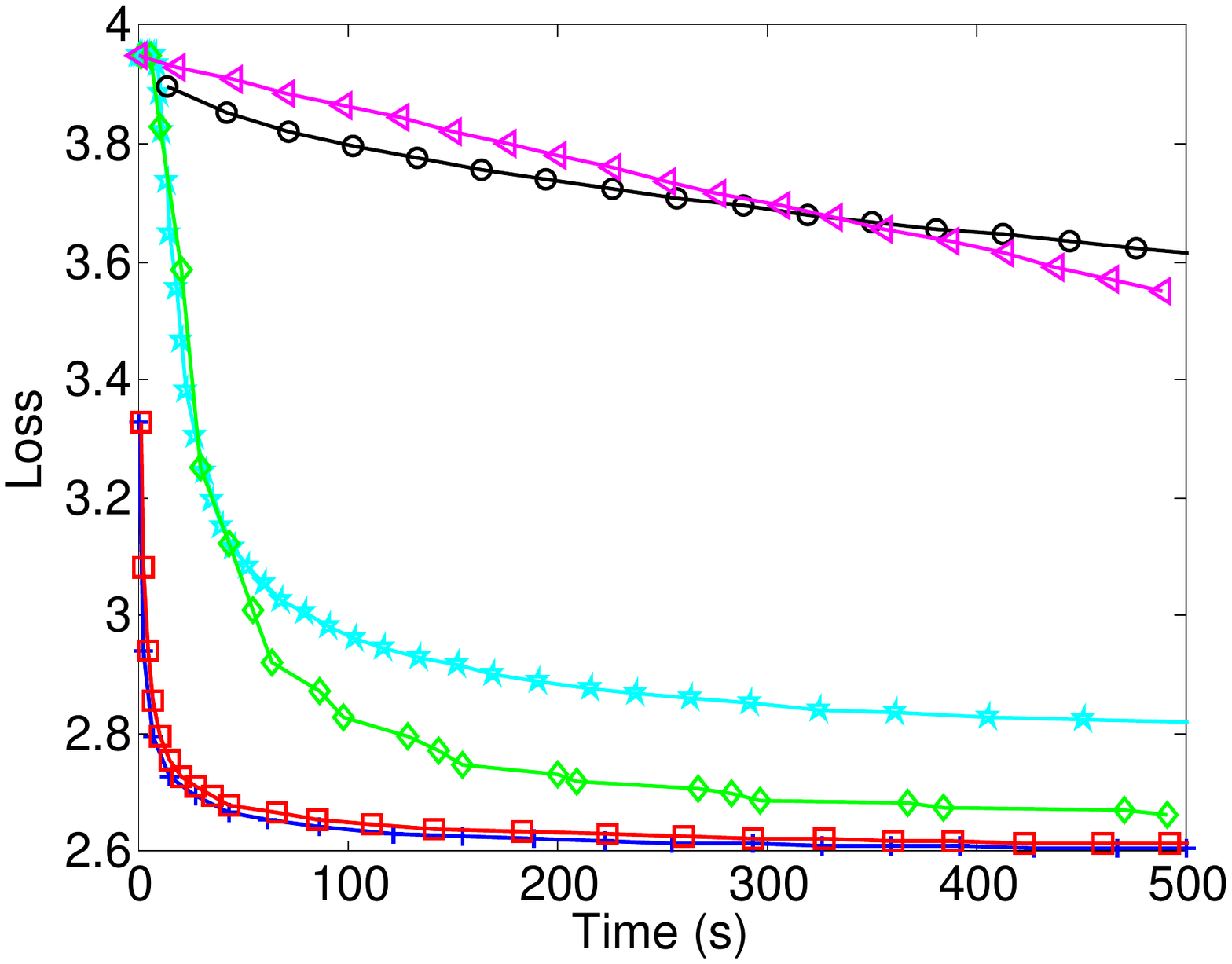}
\vskip -.6in
\captionsetup{labelformat=empty}
\caption{(b) rcv1}
\end{minipage}
\begin{minipage}{.28\hsize}
\centering
\vskip -.6in
\includegraphics[width=\textwidth]{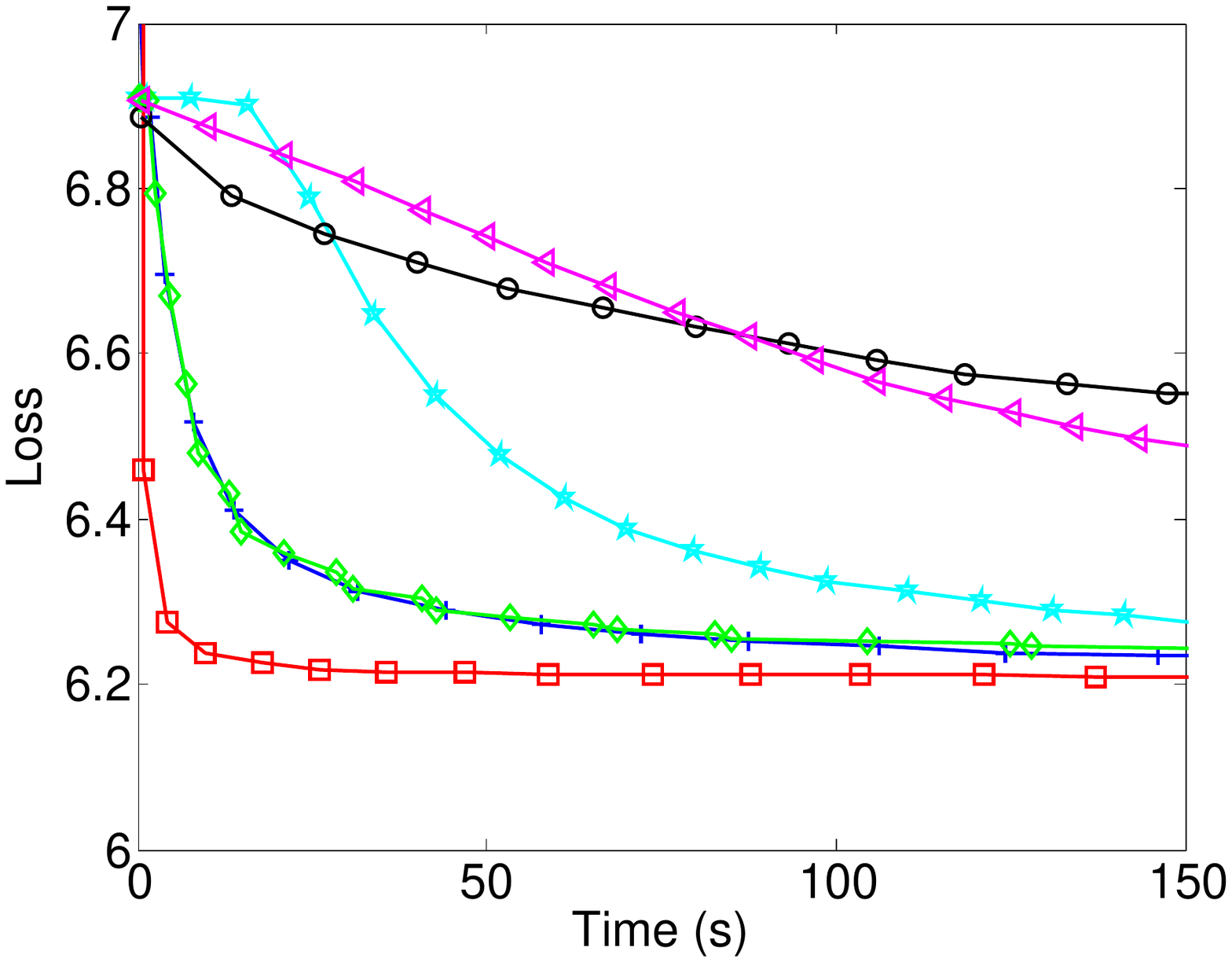}
\vskip -.6in
\captionsetup{labelformat=empty}
\caption{(c) aloi}
\end{minipage}
\begin{minipage}{.12\hsize}
\centering
\vskip -.4in
\includegraphics[width=\textwidth]{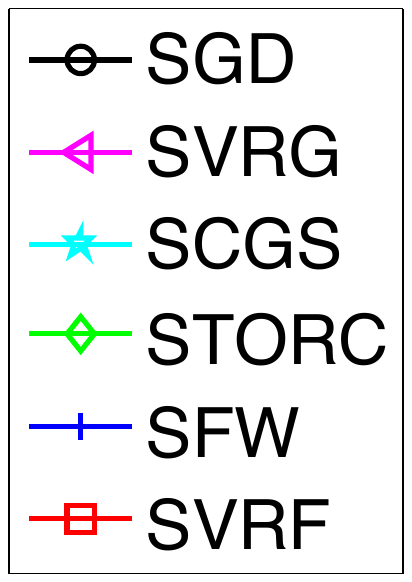}
\end{minipage}
\vspace{10pt}
\setcounter{figure}{0}
\caption{Comparison of six algorithms on three multiclass datasets (best viewed in color)}
\label{fig:experiments}
\end{figure*}

Recall that the loss function is multivariate logistic loss and $\Omega$ is the set of matrices with bounded trace norm $\tau$.
We focus on how fast the loss decreases instead of the final test error rate so that the tuning of $\tau$ is less important, and is fixed to $50$ throughout.

We compare six algorithms. Four of them (SFW, SCGS, SVRF, STORC) are projection-free as discussed, and the other two are standard projected stochastic gradient descent (SGD) and its variance-reduced version (SVRG~\citep{JohnsonZh13}), both of which require expensive projection. 

For most of the parameters in these algorithms, we roughly follow what the theory suggests. 
For example, the size of mini-batch of stochastic gradients at round $k$ is set to $k^2$, $k^3$ and $k$ respectively for SFW, SCGS and SVRF, and is fixed to $100$ for the other three.
The number of iterations between taking two snapshots for variance-reduced methods (SVRG, SVRF and STORC) are fixed to $50$.
The learning rate is set to the typical decaying sequence $c/\sqrt{k}$ for SGD and a constant $c'$ for SVRG as the original work suggests for some best tuned $c$ and $c'$.

Since the complexity of computing gradients, performing linear optimization and projecting are very different, we measure the actual running time of the algorithms and see how fast the loss decreases. 
Results can be found in Figure~\ref{fig:experiments}, where one can clearly observe that for all datasets, SGD and SVRG are significantly slower compared to the others, due to the expensive projection step, highlighting the usefulness of projection-free algorithms.
Moreover, we also observe large improvement gained from the variance reduction technique, especially when comparing SCGS and STORC,
as well as SFW and SVRF on the {\it aloi} dataset.
Interestingly, even though the STORC algorithm gives the best theoretical results, empirically the simpler algorithms SFW and SVRF tend to have consistent better performance.

\section{Omitted Proofs}\label{sec:proofs}

\begin{proof}[\textbf{Proof of Lemma~\ref{lem:variance}}]
Let $\E_i$ denotes the conditional expectation given all the past except the realization of $i$.
We have 
\begin{align*}
&\E_i[ \| \tilde\nabla f(\bw; \bw_0) - \nabla f(\bw) \|^2 ] \\
= &\; \E_i[ \| \nabla f_i(\bw) - \nabla f_i(\bw_0) + \nabla f(\bw_0) - \nabla f(\bw) \|^2 ]\\
= & \;\E_i [\| (\nabla f_i(\bw) - \nabla f_i(\bw^*)) -(  \nabla f_i(\bw_0) - \nabla f_i(\bw^*) ) \\
&+ (\nabla f(\bw_0) - \nabla f(\bw^*)) - ( \nabla f(\bw) - \nabla f(\bw^*)) \|^2 ] \\
\leq\;& 3\E_i [\| \nabla f_i(\bw) - \nabla f_i(\bw^*)\|^2 + \| (\nabla f_i(\bw_0) - \nabla f_i(\bw^*)) \\
&-( \nabla f(\bw_0) - \nabla f(\bw^*)) \|^2
+ \| \nabla f(\bw) - \nabla f(\bw^*) \|^2 ] \\
\leq\;& 3 \E_i [ \| \nabla f_i(\bw) - \nabla f_i(\bw^*)\|^2 + \| \nabla f_i(\bw_0) - \nabla f_i(\bw^*) \|^2 \\
& + \| \nabla f(\bw) - \nabla f(\bw^*) \|^2 ] 
\end{align*}
where the first inequality is Cauchy-Schwarz inequality, and the second one is by the fact $\E_i[\nabla f_i(\bw_0) - \nabla f_i(\bw^*)] = \nabla f(\bw_0) - \nabla f(\bw^*)$ and that the variance of a random variable is bounded by its second moment. 

We now apply Property~\eqref{eq:smoothness1} to bound each of the three terms above. For example,
$\E_i\| \nabla f_i(\bw) - \nabla f_i(\bw^*) \|^2 \leq 2L\E_i[ f_i(\bw) - f_i(\bw^*) - \nabla f_i(\bw^*)^\top (\bw - \bw^*) ] 
=  2L(f(\bw) - f(\bw^*) - \nabla f(\bw^*)^\top (\bw - \bw^*))$,
which is at most $2L(f(\bw) - f(\bw^*))$ by the optimality of $\bw^*$.
Proceeding similarly for the other two terms concludes the proof.
\end{proof}

\begin{proof}[\textbf{Proof of Lemma~\ref{lem:SVRF}}]
For any $s \leq k$, by smoothness we have 
$f(\bx_s) \leq f(\bx_{s-1}) + \nabla f(\bx_{s-1})^\top(\bx_s - \bx_{s-1}) + \frac{L}{2}\|\bx_s - \bx_{s-1}\|^2$.
Plugging in $\bx_s = (1 - \gamma_s) \bx_{s-1} + \gamma_s \bv_s$ gives
$f(\bx_s) \leq f(\bx_{s-1}) + \gamma_s \nabla f(\bx_{s-1})^\top(\bv_s - \bx_{s-1}) + \frac{L\gamma_s^2}{2}\|\bv_s - \bx_{s-1}\|^2$.
Rewriting and using the fact that $\|\bv_s - \bx_{s-1}\|\leq D$ leads to
\begin{align*}
f(\bx_s) &\leq f(\bx_{s-1}) + \gamma_s \tilde\nabla_s^\top(\bv_s - \bx_{s-1}) \\
&+ \gamma_s (\nabla f(\bx_{s-1}) - \tilde\nabla_s)^\top (\bv_s - \bx_{s-1}) + \frac{LD^2\gamma_s^2}{2}.
\end{align*}

The optimality of $\bv_s$ implies $\tilde\nabla_s^\top \bv_s \leq \tilde\nabla_s^\top \bw^*$.
So with further rewriting we arrive at
\begin{align*}
f(\bx_s) &\leq  f(\bx_{s-1}) + \gamma_s \nabla f(\bx_{s-1})^\top(\bw^* - \bx_{s-1}) \\ 
&+ \gamma_s (\nabla f(\bx_{s-1}) - \tilde\nabla_s)^\top (\bv_s - \bw^*) + \frac{LD^2\gamma_s^2}{2}.
\end{align*}

By convexity, term $\nabla f(\bx_{s-1})^\top(\bw^* - \bx_{s-1})$ is bounded by $f(\bw^*) - f(\bx_{s-1})$,
and by Cauchy-Schwarz inequality, term $(\nabla f(\bx_{s-1}) - \tilde\nabla_s)^\top (\bv_s - \bw^*)$ is bounded by $D \| \tilde\nabla_s - \nabla f(\bx_{s-1}) \|$, which in expectation is at most $\frac{LD^2}{s+1}$ by the condition on $\E[\| \tilde\nabla_s - \nabla f(\bx_{s-1})\|^2 ]$ and Jensen's inequality. Therefore we can bound $\E[f(\bx_s) - f(\bw^*)]$ by
\begin{align*}
& (1-\gamma_s) \E[f(\bx_{s-1}) - f(\bw^*)] + \frac{LD^2\gamma_s}{s+1} + \frac{LD^2\gamma_s^2}{2} \\
=\;& (1-\gamma_s) \E[f(\bx_{s-1}) - f(\bw^*)] + LD^2\gamma_s^2.
\end{align*}
Finally we prove $\E[f(\bx_k) - f(\bw^*)] \leq \frac{4LD^2}{k+2}$ by induction.
The base case is trival: $\E[f(\bx_1) - f(\bw^*)]$ is bounded by $(1-\gamma_1) \E[f(\bx_{0}) - f(\bw^*)] + LD^2\gamma_1^2 = LD^2$ since $\gamma_1 = 1$.
Suppose $\E[f(\bx_{s-1}) - f(\bw^*)]  \leq \frac{4LD^2}{s+1}$ then with $\gamma_s = \frac{2}{s+1}$ we bound $\E[f(\bx_s) - f(\bw^*)]$ by 
\[ \frac{4LD^2}{s+1}\left(1 - \frac{2}{s+1} + \frac{1}{s+1} \right) \leq \frac{4LD^2}{s+2}, \]
completing the induction.
\end{proof}

\section{Conclusion and Open Problems}
We conclude that the variance reduction technique, previously shown to be highly useful for gradient descent variants, can also be very helpful in speeding up projection-free algorithms.
The main open question is, in the strongly convex case, whether the number of stochastic gradients for STORC can be improved from $\scO(\mu^2\ln \frac{1}{\epsilon})$ to $\scO(\mu\ln \frac{1}{\epsilon})$, which is typical for gradient descent methods, and whether the number of linear optimizations can be improved from $\scO(\frac{1}{\epsilon})$ to $\scO(\ln \frac{1}{\epsilon})$.

\paragraph{Acknowledgements}
The authors acknowledge support from the National Science Foundation grant IIS-1523815 and a Google research award.

\bibliography{ref}
\bibliographystyle{icml2016}

\newpage
\onecolumn
\appendix

\iflong
\begin{center}
\bf \Large Supplementary material for \\ ``Variance-Reduced and Projection-Free Stochastic Optimization''
\end{center}
\fi

\section{Proof of Property~\eqref{eq:smoothness1}}\label{app:smoothness}

\begin{proof}
We drop the subscript $i$ for conciseness. 
Define $g(\bw) = f(\bw) - \nabla f(\bv)^\top \bw$,
which is clearly also convex and $L$-smooth on $\Omega$.
Since $\nabla g(\bv) = \bzero$, $\bv$ is one of the minimizers of $g(\bw)$.
Therefore we have
\begin{align*}
g(\bv) - g(\bw) &\leq  g(\bw - \frac{1}{L}\nabla g(\bw)) - g(\bw) \\
&\leq \nabla g(\bw)^\top (\bw - \frac{1}{L}\nabla g(\bw) - \bw) + \frac{L}{2} \| \bw - \frac{1}{L}\nabla g(\bw) - \bw \|^2  \tag{by smoothness of $g$}\\
&= -\frac{1}{2L}\| \nabla g(\bw) \|^2 = -\frac{1}{2L}\| \nabla f(\bw) -  \nabla f(\bv)\|^2
\end{align*}
Rearranging and plugging in the definition of $g$ concludes the proof.
\end{proof}

\section{Analysis for SFW}\label{app:SFW}
The concrete update of SFW is 
\begin{equation*}
\begin{split}
\bv_k &= \argmin_{\bv \in \Omega} \tilde\nabla_k^\top \bv \\
\bw_k &= (1 - \gamma_k) \bw_{k-1} + \gamma_k \bv_k
\end{split}
\end{equation*}
where $\tilde\nabla_k$ is the average of $m_k$ iid samples of stochastic gradient $\nabla f_i(\bw_{k-1})$.
The convergence rate of SFW is presented below.

\begin{theorem}
If each $f_i$ is $G$-Lipschitz, then with $ \gamma_k = \frac{2}{k+1}$ and $m_k = \left( \frac{G(k+1)}{LD}\right)^2$, SFW ensures for any $k$,
\[ \E[f(\bw_k) - f(\bw^*)] \leq \frac{4LD^2}{k+2}. \] 
\end{theorem}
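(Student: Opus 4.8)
The plan is to mimic the proof of Lemma~\ref{lem:SVRF} almost verbatim, since the SFW update is structurally identical to the inner loop of Algorithm~\ref{alg:SVRF} with $\tilde\nabla_k$ now being an average of plain stochastic gradients $\nabla f_i(\bw_{k-1})$ rather than variance-reduced ones. First I would fix an iteration index $s \le k$ and apply $L$-smoothness of $f$ to $f(\bw_s) = f((1-\gamma_s)\bw_{s-1} + \gamma_s \bv_s)$, which yields
\[ f(\bw_s) \le f(\bw_{s-1}) + \gamma_s \nabla f(\bw_{s-1})^\top(\bv_s - \bw_{s-1}) + \tfrac{L\gamma_s^2}{2}\|\bv_s - \bw_{s-1}\|^2 \le f(\bw_{s-1}) + \gamma_s \nabla f(\bw_{s-1})^\top(\bv_s - \bw_{s-1}) + \tfrac{LD^2\gamma_s^2}{2}, \]
using the diameter bound $\|\bv_s - \bw_{s-1}\| \le D$. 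Then I would add and subtract $\tilde\nabla_s$ inside the inner product, use the optimality of $\bv_s$ (i.e.\ $\tilde\nabla_s^\top \bv_s \le \tilde\nabla_s^\top \bw^*$), and use convexity to replace $\nabla f(\bw_{s-1})^\top(\bw^* - \bw_{s-1})$ by $f(\bw^*) - f(\bw_{s-1})$, arriving at
\[ f(\bw_s) \le (1-\gamma_s) f(\bw_{s-1}) + \gamma_s f(\bw^*) + \gamma_s (\nabla f(\bw_{s-1}) - \tilde\nabla_s)^\top(\bv_s - \bw^*) + \tfrac{LD^2\gamma_s^2}{2}. \]

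Next I would control the error term. By Cauchy--Schwarz it is at most $\gamma_s D\|\nabla f(\bw_{s-1}) - \tilde\nabla_s\|$, and since $\tilde\nabla_s$ is an average of $m_s$ i.i.d.\ unbiased samples of $\nabla f_i(\bw_{s-1})$, each with norm at most $G$ (hence the gradient and its samples each have norm $\le G$, so the deviation has second moment $\le (2G)^2$, or more carefully variance $\le G^2$), we get $\E\|\nabla f(\bw_{s-1}) - \tilde\nabla_s\|^2 \le \tfrac{G^2}{m_s}$ and therefore by Jensen $\E\|\nabla f(\bw_{s-1}) - \tilde\nabla_s\| \le \tfrac{G}{\sqrt{m_s}}$. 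Plugging in $m_s = (G(s+1)/(LD))^2$ makes this exactly $\tfrac{LD}{s+1}$, so $\E[\gamma_s D\|\nabla f(\bw_{s-1}) - \tilde\nabla_s\|] \le \tfrac{LD^2\gamma_s}{s+1}$. Taking expectations gives the recursion
\[ \E[f(\bw_s) - f(\bw^*)] \le (1-\gamma_s)\E[f(\bw_{s-1}) - f(\bw^*)] + \tfrac{LD^2\gamma_s}{s+1} + \tfrac{LD^2\gamma_s^2}{2} = (1-\gamma_s)\E[f(\bw_{s-1}) - f(\bw^*)] + LD^2\gamma_s^2, \]
where the last equality uses $\gamma_s = \tfrac{2}{s+1}$ so that $\tfrac{\gamma_s}{s+1} = \tfrac{\gamma_s^2}{2}$.

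Finally I would close the argument by induction on $k$, exactly as in the proof of Lemma~\ref{lem:SVRF}. The base case $k=1$: since $\gamma_1 = 1$, the recursion gives $\E[f(\bw_1) - f(\bw^*)] \le LD^2 \le \tfrac{4LD^2}{3}$. For the inductive step, assuming $\E[f(\bw_{s-1}) - f(\bw^*)] \le \tfrac{4LD^2}{s+1}$, with $\gamma_s = \tfrac{2}{s+1}$ we get
\[ \E[f(\bw_s) - f(\bw^*)] \le \tfrac{4LD^2}{s+1}\paren{1 - \tfrac{2}{s+1}} + \tfrac{4LD^2}{(s+1)^2} = \tfrac{4LD^2}{s+1}\paren{1 - \tfrac{1}{s+1}} = \tfrac{4LD^2 s}{(s+1)^2} \le \tfrac{4LD^2}{s+2}, \]
the last step because $s(s+2) \le (s+1)^2$. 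This gives $\E[f(\bw_k) - f(\bw^*)] \le \tfrac{4LD^2}{k+2}$ as claimed. I do not anticipate a serious obstacle here since this is essentially the Frank--Wolfe analysis with a mini-batch noise term; the only point requiring mild care is the variance bound for $\tilde\nabla_s$ — one must be slightly careful about whether to use the $G^2$ or $(2G)^2$ constant when bounding $\E\|\nabla f(\bw_{s-1}) - \tilde\nabla_s\|^2$, but since only $m_s$ up to constants matters for the stated rate, and the theorem's constant $m_k = (G(k+1)/(LD))^2$ together with the slack in the final induction absorbs it, this is not a real difficulty.
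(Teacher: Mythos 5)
Your proof is correct and follows essentially the same route as the paper's: the same smoothness/optimality/convexity chain leading to the recursion $\E[f(\bw_k) - f(\bw^*)] \leq (1-\gamma_k)\E[f(\bw_{k-1}) - f(\bw^*)] + LD^2\gamma_k^2$, the same variance bound $\E\|\tilde\nabla_k - \nabla f(\bw_{k-1})\|^2 \leq G^2/m_k$ via bounding variance by second moment, and the same concluding induction (which the paper leaves as ``a simple induction'' and you spell out correctly). No gaps.
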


\begin{proof}
Similar to the proof of Lemma~\ref{lem:SVRF}, we first proceed as follows, 
\begin{align*}
f(\bw_k) &\leq f(\bw_{k-1}) + \nabla f(\bw_{k-1})^\top(\bw_k - \bw_{k-1}) + \frac{L}{2}\|\bw_k - \bw_{k-1}\|^2 \tag{smoothness} \\
&= f(\bw_{k-1}) + \gamma_k \nabla f(\bw_{k-1})^\top(\bv_k - \bw_{k-1}) + \frac{L\gamma_k^2}{2}\|\bv_k - \bx_{k-1}\|^2  \tag{$\bw_k - \bw_{k-1}  = \gamma_k (\bv_k - \bw_{k-1})$} \\ 
&\leq f(\bw_{k-1}) + \gamma_k \tilde\nabla_k^\top(\bv_k - \bw_{k-1}) + \gamma_k (\nabla f(\bw_{k-1}) - \tilde\nabla_k)^\top (\bv_k - \bw_{k-1}) + \frac{LD^2\gamma_k^2}{2}  \tag{$\|\bv_k - \bw_{k-1}\| \leq D$} \\
&\leq f(\bw_{k-1}) + \gamma_k \tilde\nabla_k^\top(\bw^* - \bw_{k-1}) + \gamma_k (\nabla f(\bw_{k-1}) - \tilde\nabla_k)^\top (\bv_k - \bw_{k-1}) + \frac{LD^2\gamma_k^2}{2} \tag{by optimality of $\bv_k$} \\
&= f(\bw_{k-1}) + \gamma_k \nabla f(\bw_{k-1})^\top(\bw^* - \bw_{k-1}) + \gamma_k (\nabla f(\bw_{k-1}) - \tilde\nabla_k)^\top (\bv_k - \bw^*) + \frac{LD^2\gamma_k^2}{2} \\
&\leq f(\bw_{k-1}) + \gamma_k (f(\bw^*) - f(\bw_{k-1})) + \gamma_k D \| \tilde\nabla_k - \nabla f(\bw_{k-1}) \| + \frac{LD^2\gamma_k^2}{2}, 
\end{align*}
where the last step is by convexity and Cauchy-Schwarz inequality. 
Since $f_i$ is $G$-Lipschitz, with Jensen's inequality, we further have $\E[ \|\tilde\nabla_k  - \nabla f(\bw_{k-1}) \|] \leq \sqrt{\E[ \|\tilde\nabla_k  - \nabla f(\bw_{k-1}) \|^2] } \leq\frac{G}{ \sqrt{m_k}} $, 
which is at most $\frac{LD\gamma_k}{2}$ with the choice of $\gamma_k$ and $m_k$. 
So we arrive at
$\E[f(\bw_k) - f(\bw^*)] \leq (1-\gamma_k) \E[f(\bw_{k-1}) - f(\bw^*)] + LD^2\gamma_k^2$.
It remains to use a simple induction to conclude the proof.
\end{proof}

Now it is clear that to achieve $1 - \epsilon$ accuracy, SFW needs $\scO(\frac{LD^2}{\epsilon})$ iterations, 
and in total $\scO(\frac{G^2}{L^2D^2} (\frac{LD^2}{\epsilon})^3) = \scO(\frac{G^2LD^4}{\epsilon^3})$ stochastic gradients.

\section{Proof of Lemma~\ref{lem:STORC}}\label{app:STORC}
\begin{proof}
Let $\bdelta_s = \tilde\nabla_s - \nabla f(\bz_s)$. For any $s \leq k$, we proceed as follows:
\begin{align*}
f(\by_s) &\leq f(\bz_s) + \nabla f(\bz_s)^\top (\by_s - \bz_s) + \frac{L}{2} \norm{\by_s - \bz_s}^2  \tag{by smoothness} \\
&= (1 - \gamma_s) ( f(\bz_s) + \nabla f(\bz_s)^\top (\by_{s-1} - \bz_s)) +
    \gamma_s (f(\bz_s) + \nabla f(\bz_s)^\top (\bw^* - \bz_s) ) + \gamma_s \nabla f(\bz_s)^\top (\bx_s - \bw^*)  \notag \\
    &\quad\quad + \frac{L\gamma_s^2}{2} \norm{\bx_s - \bx_{s-1}}^2   \tag{by definition of $\by_s$ and $\bz_s$} \\
&\leq  (1 - \gamma_s) f(\by_{s-1})  +  \gamma_s f(\bw^*)  + \gamma_s \nabla f(\bz_s)^\top (\bx_s - \bw^*) + \frac{L\gamma_s^2}{2} \norm{\bx_s - \bx_{s-1}}^2  \tag{by convexity} \\    
&= (1 - \gamma_s) f(\by_{s-1})  +  \gamma_s f(\bw^*)  + \gamma_s \tilde\nabla_s^\top (\bx_s - \bw^*) + \frac{L\gamma_s^2}{2} \norm{\bx_s - \bx_{s-1}}^2 + \gamma_s \bdelta_s^\top (\bw^* - \bx_s)  \notag \\    
&\leq (1 - \gamma_s) f(\by_{s-1})  +  \gamma_s f(\bw^*)  + \gamma_s \eta_{t,s} - \gamma_s \beta_s (\bx_s - \bx_{s-1})^\top (\bx_s - \bw^*)  + \frac{L\gamma_s^2}{2} \norm{\bx_s - \bx_{s-1}}^2 + \gamma_s \bdelta_s^\top (\bw^* - \bx_s)  \tag{by Eq.~\eqref{eq:gap}} \\    
&= (1 - \gamma_s) f(\by_{s-1})  +  \gamma_s f(\bw^*)  + \gamma_s \eta_{t,s} + \frac{\beta_s\gamma_s}{2} (\norm{\bx_{s-1} - \bw^*}^2 -  \norm{\bx_s - \bw^*}^2)  +  \notag \\
&\quad\quad \frac{\gamma_s}{2} \left((L\gamma_s - \beta_s) \norm{\bx_s - \bx_{s-1}}^2 + 2\bdelta_s^\top (\bx_{s-1} - \bx_s) + 
  2\bdelta_s^\top (\bw^* - \bx_{s-1})  \right) \notag \\    
&\leq (1 - \gamma_s) f(\by_{s-1})  +  \gamma_s f(\bw^*)  + \gamma_s \eta_{t,s} + \frac{\beta_s\gamma_s}{2} (\norm{\bx_{s-1} - \bw^*}^2 -  \norm{\bx_s - \bw^*}^2)  +  \frac{\gamma_s}{2} \left( \frac{\norm{\bdelta_s}^2}{\beta_s - L\gamma_s} + 2\bdelta_s^\top (\bw^* - \bx_{s-1})  \right),
\end{align*}
where the last inequality is by the fact $\beta_s \geq L\gamma_s$ and thus
\[ (L\gamma_s - \beta_s) \norm{\bx_s - \bx_{s-1}}^2 + 2\bdelta_s^\top (\bx_{s-1} - \bx_s) 
=   \frac{\norm{\bdelta_s}^2}{\beta_s - L\gamma_s} - (\beta_s - L\gamma_s) \norm{\bx_s - \bx_{s-1} - \frac{\bdelta_s}{\beta_s - L\gamma_s} }^2 
\leq  \frac{\norm{\bdelta_s}^2}{\beta_s - L\gamma_s}. \]
Note that $\E[\bdelta_s^\top (\bw^* - \bx_{s-1}) ] = \bzero$. 
So with the condition $\E[\norm{\bdelta_s}^2] \leq \frac{L^2 D_t^2}{N_t(s+1)^2} \defeq \sigma_s^2$ we arrive at
\[ \E[f(\by_s) - f(\bw^*)] \leq  (1 - \gamma_s) \E[f(\by_{s-1}) - f(\bw^*)] + \gamma_s \left(\eta_{t,s} + \frac{\beta_s}{2} (\E[\norm{\bx_{s-1} - \bw^*}^2] -  \E[\norm{\bx_s - \bw^*}^2])  
+ \frac{\sigma_s^2}{2(\beta_s - L\gamma_s)} \right). \]

Now define $\Gamma_s = \Gamma_{s-1} (1 - \gamma_s) $ when $s > 1$ and $\Gamma_1 = 1$. 
By induction, one can verify $\Gamma_s = \frac{2}{s(s+1)}$ and the following:
\[ \E[f(\by_k) - f(\bw^*)] \leq \Gamma_k \sum_{s=1}^k \frac{\gamma_s}{\Gamma_s} \left(\eta_{t,s} + \frac{\beta_s}{2} (\E[\norm{\bx_{s-1} - \bw^*}^2] -  \E[\norm{\bx_s - \bw^*}^2])  
+ \frac{\sigma_s^2}{2(\beta_s - L\gamma_s)} \right), \]
which is at most
\[
\Gamma_k \sum_{s=1}^k \frac{\gamma_s}{\Gamma_s} \left( \eta_ s + \frac{\sigma_s^2}{2(\beta_s - L\gamma_s)} \right)  +
\frac{\Gamma_k}{2} \left( \frac{\gamma_1\beta_1}{\Gamma_1} \E[\norm{\bx_{0} - \bw^*}^2]  + \sum_{s=2}^k \left(\frac{\gamma_s\beta_s}{\Gamma_s} - \frac{\gamma_{s-1}\beta_{s-1}}{\Gamma_{s-1}}\right)\E[\norm{\bx_{s-1} - \bw^*}^2] \right) . 
\]
Finally plugging in the parameters $\gamma_s$, $\beta_s$, $\eta_{t,s}$, $\Gamma_s$ and the bound $\E[\|\bx_0 - \bw^*\|^2] \leq D_t^2$ concludes the proof:
\[ \E[f(\by_k) - f(\bw^*)] \leq \frac{2}{k(k+1)} \sum_{s=1}^k k \left( \frac{2LD_t^2}{N_t k} + \frac{L D_t^2}{2N_t(k+1)} \right)  +
\frac{3L D_t^2}{k(k+1)} \leq \frac{8LD_t^2}{k(k+1)}.  \]
\end{proof}

\end{document}